\documentclass[10pt]{article} 
\usepackage[accepted]{tmlr}

\usepackage{amsmath,amsfonts,bm}

\def\eqref#1{equation~\ref{#1}}

\def\1{\bm{1}}

\DeclareMathAlphabet{\mathsfit}{\encodingdefault}{\sfdefault}{m}{sl}
\SetMathAlphabet{\mathsfit}{bold}{\encodingdefault}{\sfdefault}{bx}{n}

\usepackage{hyperref}
\usepackage{url}
\usepackage{booktabs}
\usepackage{amsmath}
\usepackage{amssymb}
\usepackage{mathtools}
\usepackage{amsthm}
\usepackage{latexsym}
\usepackage{tabularx}
\usepackage{multirow}
\usepackage{enumitem}
\usepackage{xcolor}
\usepackage{adjustbox}
\usepackage{algorithm}
\usepackage{algpseudocode}

\theoremstyle{plain}
\newtheorem{theorem}{Theorem}[section]

\theoremstyle{definition}

\newtheorem{assumption}[theorem]{Assumption}
\theoremstyle{remark}

\newcommand{\ERASE}{\texttt{ERASE}}

\title{On-the-go Forgetting without Explicit Unlearning via ERASE}

\author{\name Kushal Chakrabarti \email chakrabarti.k1@tcs.com \\
      \addr
      Tata Consultancy Services Research, Mumbai
      \AND
      \name Mayank Baranwal \email mbaranwal@iitb.ac.in\\
      \addr Tata Consultancy Services Research, Mumbai\\
      \addr Indian Institute of Technology Bombay}

\def\month{07}  
\def\year{2026} 
\def\openreview{\url{https://openreview.net/forum?id=PIXVov5LQq}} 

\begin{document}

\maketitle

\begin{abstract}
Existing unlearning approaches typically rely on post hoc weight adaptation or distillation, leading to duplicated memory costs, degraded generalization, and limited scalability. In this work, we introduce \ERASE{}, \emph{Erasure via Reconstructive Adversarial Signal Editing}, a framework for on-the-go forgetting that suppresses the observable influence of private data without modifying model weights. \ERASE{} leverages structured, class-conditioned input perturbations to induce selective forgetting during inference, eliminating the need for retraining, fine-tuning, or model copies. We rigorously characterize sufficient conditions when \ERASE{} provably achieves functional forgetting of designated subclasses while preserving predictions across other subclasses within the same superclass. This analysis offers a principled foundation for inference-time forgetting under mild regularity assumptions. Across diverse architectures and benchmark datasets, \ERASE{} maintains the best observed balance between forgetting efficacy, computational efficiency, and retention fidelity over recent unlearning-based methods. By reimagining data removal as \emph{forgetting without unlearning}, our work establishes a scalable, regulation-aligned pathway for continual, privacy-conscious learning.
\end{abstract}

\section{Introduction}
\label{sec:intro}
As machine learning systems increasingly interact with sensitive and continuously evolving data streams, their ability to \emph{forget on demand} has become a cornerstone of privacy compliance and adaptive intelligence. In regulated domains such as healthcare, personalized services, and user-generated platforms, deployed models must often suppress the influence of outdated or private data~\citep{li2025machine}, not through retraining, but through immediate, behavioral forgetting. The challenge is to ensure that models no longer act on restricted data while maintaining utility on permissible data.

Conventional approaches to data removal primarily pursue \emph{structural unlearning}, in which model parameters are explicitly modified to mimic the effect of retraining on a pruned dataset. Although this paradigm guarantees theoretical completeness, it is often computationally prohibitive and operationally impractical~\citep{nguyen2022survey} in deployed systems that must respond to privacy or removal requests instantly. Such systems include on-device assistants that must forget on command, financial fraud detectors constrained by always-on operation, certified medical diagnostics that cannot be retrained after patient consent withdrawal, and live-service game AIs requiring instant behavioral updates.

We thus advocate a complementary notion, \textbf{functional forgetting}, that focuses not on erasing knowledge from weights, but on preventing its manifestation during inference. Functional forgetting treats data removal as a \emph{behavioral} constraint rather than a retraining objective: the model retains its parameters intact yet behaves as if the forgotten data never existed when presented with relevant inputs. This perspective aligns with the needs of modern, continuously operating AI systems that must adapt in real time while preserving the stability of their learned representations.

To realize this notion, we propose \ERASE{} (Erasure via Reconstructive Adversarial Signal Editing), a framework that operationalizes forgetting through \textbf{input-space modulation} rather than weight updates. \ERASE{} leverages conditional priors and adversarial-style signal editing to selectively suppress responses associated with a designated forget set. By inducing erasure directly in the input domain, \ERASE{} eliminates the need for retraining, fine-tuning, or access to internal gradients each time, thereby enabling \emph{on-the-go compliance} in resource-constrained environments. Our key contributions are summarized as follows:

\textbf{1. On-the-go Forgetting Framework.} We propose \ERASE, a novel framework for \emph{inference-time forgetting} that enables models to suppress the influence of private data without retraining or gradient access each time. \ERASE{} is designed for deployed settings, marking a shift from structural unlearning to functional forgetting.\\
\textbf{2. Selective Erasure via Conditional Adversarial Perturbations.} \ERASE{} introduces \emph{reconstructive adversarial signal editing}, a targeted input-space perturbation mechanism guided by \emph{precomputed} conditional priors. This mechanism enforces forgetting only where necessary.\\
\textbf{3. Efficient and Deployable Privacy Compliance.} Our approach eliminates the need for retraining or auxiliary models, achieving extreme computational efficiency. It is well-suited for environments where model updates are impractical.\\
\textbf{4. Theoretical Insights.} We rigorously characterize sufficient conditions under which \ERASE{} provably forgets the targeted subclass while retaining predictions on the other subclasses within the same superclass.\\
\textbf{5. Comprehensive Evaluation and Insights.} We evaluate \ERASE{} across a wide range of model architectures and diverse datasets spanning multiple modalities. \ERASE{} achieves competitive forgetting while substantially improving retention and deployment efficiency, even when compared against methods that rely on continual parameter updates. Ablation studies confirm that \ERASE{} provides \emph{targeted} forgetting.

\begin{figure*}[!ht]
	\begin{center}
		\begin{tabular}{c}
    \includegraphics[width=0.95\columnwidth]{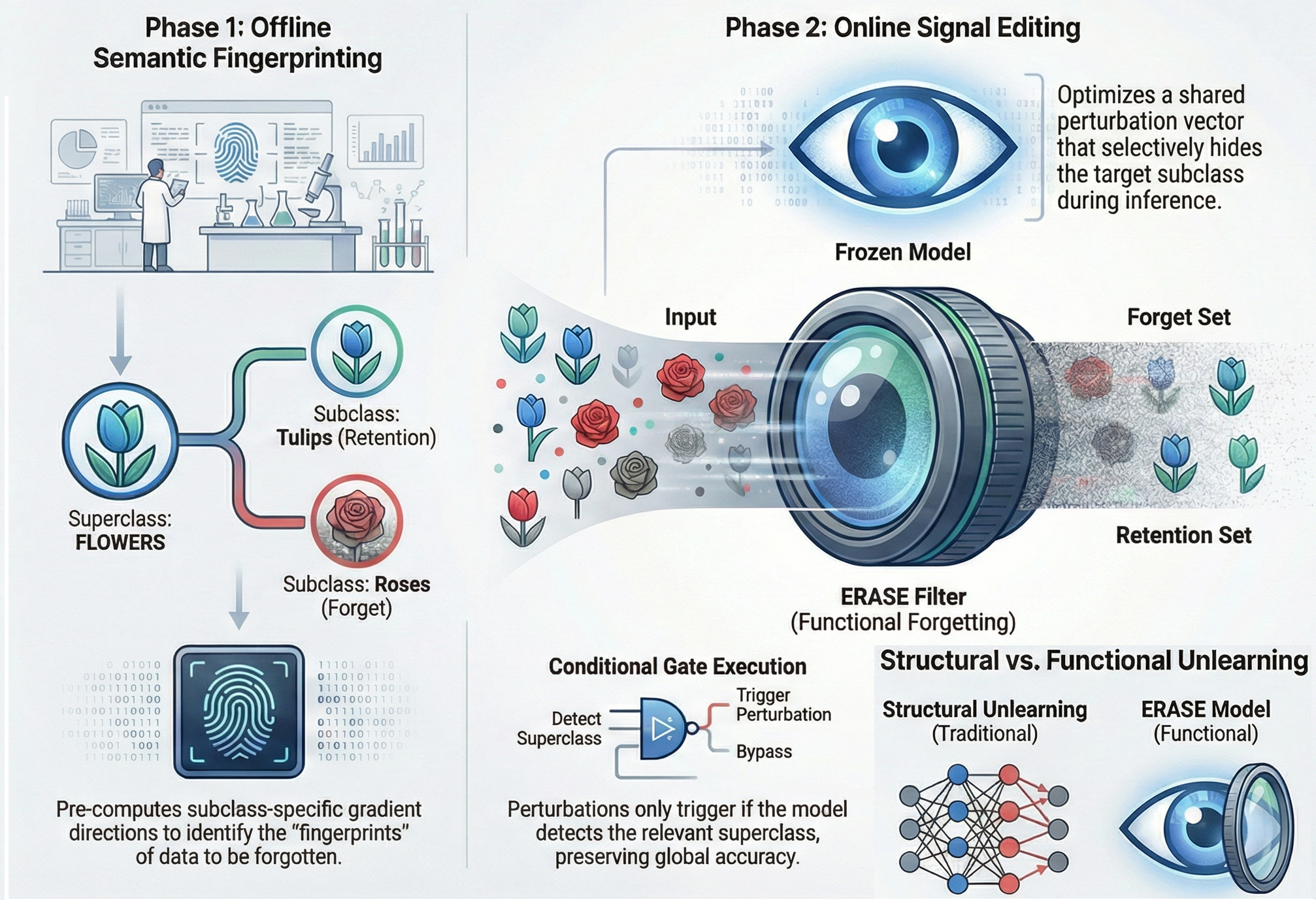}
		\end{tabular}
	\end{center}
	\caption{Overview of \ERASE. In the offline phase, subclass-specific input-gradient priors are computed once from the frozen model and stored as semantic directions. In the online phase, these priors are combined into forget and retain directions, whose optimized combination is conditionally applied at inference time only when the input is predicted to belong to the target superclass. This suppresses the forgotten subclass while preserving retention on related sibling subclasses and the broader retain set, without fine-tuning the backbone model.}
	\label{fig:ERASE}
\end{figure*}

It is important to emphasize that our framework targets \emph{functional forgetting} in discriminative models, where parameter modification is neither feasible nor desirable. While generative architectures may employ prompt-level filtering or guardrailing to achieve a similar goal, such mechanisms are inherently unavailable in standard classification or recognition pipelines. \ERASE{} is designed precisely for these deployment-level privacy scenarios, where the objective is to hide the effect of forgotten data rather than retrain it away.

\section{Related work}
Research on data removal in machine learning has been dominated by \textbf{machine unlearning}, where the goal is to eliminate the influence of designated training samples by \emph{modifying model parameters} so that the resulting model approximates retraining on a pruned dataset. In this sense, most prior work targets what we refer to as \textbf{structural unlearning}: knowledge removal is achieved through updates to the model itself, whether via exact recomputation, approximate optimization, distillation, or selective retraining. This paradigm has produced important theoretical and practical advances, but it remains fundamentally tied to a \emph{training-time} workflow and therefore incurs nontrivial computational, memory, or deployment overhead~\citep{nguyen2022survey, li2025machine}. Such costs are particularly restrictive in always-on systems that must react to privacy, regulatory, or user-driven removal requests without retraining delays.

Our work is motivated by a different objective. Rather than enforcing deletion through parameter updates, we study \textbf{functional forgetting}, where the model parameters remain fixed and the desired forgetting behavior is induced only at inference time. The distinction is central: structural unlearning attempts to remove information from the weights, whereas functional forgetting aims to prevent that information from being expressed in downstream predictions. This framing is especially relevant for deployed discriminative models, where retraining, fine-tuning, or maintaining multiple model copies may be impractical. Within this broader landscape, the following categories of prior work illustrate the evolution of explicit unlearning approaches that our method conceptually extends and operationally supersedes.

\textbf{Exact and certified unlearning.} The earliest line of work sought strong deletion guarantees by making the post-unlearning model identical, or provably close, to a model retrained from scratch on the retained data. Such guarantees are achievable only for restricted model families or carefully designed pipelines, including settings such as linear and logistic regression, $k$-means, and random forests~\citep{ginart2019making, mahadevan2022certifiable}. Certified data removal methods further provide provable bounds on the deviation from exact retraining~\citep{guo2020certified}. These methods establish an important foundation for verifiable forgetting, but their assumptions and computational structure limit their applicability to modern deep models and large-scale deployment scenarios.

\textbf{Efficient retraining through partitioning and aggregation.} A second line of work improves the efficiency of structural unlearning by changing how the model is trained in the first place. Sharded, Isolated, Sliced, and Aggregated (SISA) training~\citep{bourtoule2021machine} partitions the dataset into shards, trains separate sub-models, and aggregates their predictions, so that forgetting only requires retraining the affected shard rather than the full system. This reduces the cost of deletion and is broadly model-agnostic, but it still relies on retraining after a deletion request and may degrade utility because partitioning weakens cross-shard information sharing.

\textbf{Approximate unlearning via influence reversal and optimization.} To scale unlearning beyond exact methods, many approaches estimate and reverse the contribution of forgotten samples through approximate optimization. Representative techniques include influence functions~\citep{koh2017understanding}, Newton-style removal updates~\citep{guo2020certified}, and gradient-based forgetting objectives such as gradient ascent on the forget set~\citep{golatkar2020eternal, graves2021amnesiac}. These methods are attractive because they can be adapted to deep networks and are often substantially faster than retraining from scratch. However, they still alter the model parameters, and their approximate nature can leave residual traces of the removed data, which may be detectable under privacy audits such as membership inference attacks~\citep{shokri2017membership}.

\textbf{Corrective fine-tuning and representation-level forgetting.} More recent structural unlearning methods perform targeted corrective updates to suppress forgotten information. Examples include amnesiac unlearning~\citep{graves2021amnesiac}, which applies counteracting gradient information, and impairing-data approaches that fine-tune the model to obscure sensitive content~\citep{baumhauer2022machine}. Other methods explicitly manipulate the geometry of the learned predictor, for example by shifting the decision boundary away from forget samples~\citep{chen2023boundary}, or by masking entangled components and constraining knowledge transfer to reduce re-emergence of forgotten information~\citep{lin2023erm}. These approaches improve targeting relative to naive retraining, but they remain dependent on post hoc parameter updates and often introduce additional optimization complexity, auxiliary components, or verification challenges.

\textbf{Distillation-based and teacher-guided unlearning.} A related family of methods uses knowledge distillation to disentangle forgetting and retention objectives. For example, \citet{chundawat2023can} employs an incompetent teacher to induce forgetting through deliberately misleading supervision, while \citet{zhou2025decoupled} decouples distillation into separate forgetting and retention modules to reduce interference between the two objectives. Distillation-based approaches can be effective, but typically require additional teacher models, repeated training procedures, or auxiliary modules, which increases system complexity and resource.

\textbf{Input-space and perturbation-based forgetting/unlearning.} 
More recently, a closely related line of work has begun to explore forgetting through \emph{input-space} interventions rather than direct weight updates. \citet{peng2025adversarial} studied catastrophic unlearning through an adversarial generator--unlearner framework that synthesizes challenging mixup samples to regularize the unlearning process. In parallel, recent class-wise unlearning methods have shown that targeted or universal adversarial perturbations (UAP) can directly stimulate forgetting behavior at the input level, including UAP-based class-wise forgetting~\citep{zhou2025stimulating} and input-agnostic ``forget vectors'' that drive machine unlearning while keeping model weights fixed~\citep{sun2024forget}. These works are especially relevant to our setting because they move unlearning closer to adversarial input manipulation. Our \ERASE{} framework is most closely aligned with this emerging perspective, but differs in several important ways. Specifically, \ERASE{} performs \emph{conditional inference-time} forgetting for deployed discriminative models and uses an explicit forget/retain decomposition to preserve sibling subclasses within the same superclass. It is also hierarchy-aware and deployment-oriented, since perturbations are activated only when the model predicts the relevant superclass. In addition, \ERASE{} provides theoretical sufficient conditions characterizing when targeted forgetting and retention can be achieved simultaneously. Note that \citet{peng2025adversarial} and \citet{zhou2025stimulating} are not direct baselines for our setting, as the former uses an adversarial generator-unlearner with synthesized mixup samples, while the latter studies UAP-based class-wise unlearning. ERASE instead performs conditional inference-time functional forgetting for deployed discriminative models, with an explicit forget-retain decomposition for preserving sibling subclasses. Unlike prior perturbation-based unlearning works centered primarily on image classification, \ERASE{} is evaluated across multiple modalities, supporting its applicability beyond visual settings.

Recent LLM unlearning works such as \citep{li2024wmdp, wang2025llm, yuan2025closer} target generative next-token models, prompt-response behavior, or LLM representation steering, and are therefore outside our discriminative superclass--subclass classification setting.

Overall, existing literature has substantially improved the efficiency, selectivity, and verifiability of structural unlearning, and recent perturbation-based methods have begun to show that forgetting can be induced from input side. Our \ERASE{} framework builds on this emerging direction and advances it toward conditional, deployable, inference-time functional forgetting without retraining, fine-tuning, or auxiliary model copies.

\section{Notations and problem formulation}
\label{sec:not_prob_form}


Let $M_\theta$ be a fixed pre-trained discriminative model with parameters $\theta$, trained on a labeled dataset $\mathcal{D}_{\text{train}}$. Each data point in $\mathcal{D}_{\text{train}}$ is annotated with a subclass label (e.g., \textit{Persian cat}) and a corresponding superclass label (e.g., \textsc{cat}). The label hierarchy is structured such that multiple fine-grained subclasses (e.g., \textit{Persian cat}, \textit{tabby}, \textit{cougar}) are grouped under broader superclasses (e.g., \textsc{cat}). 
We emphasize that the pre-trained model $M_\theta$ is trained to map each input $x$ to its associated superclass label only, without access to subclass-level supervision. Given a target subclass $C_f$ to be unlearned, we posit to construct a (super)class-specific input-space perturbation that, when applied at inference time, selectively degrades the model’s performance on samples from the \emph{forget set}, while maintaining high accuracy on all remaining subclasses that collectively form the \emph{overall retain set}. This objective stands in contrast to traditional machine unlearning, which instead aims to \emph{perturb} the model parameters $\theta$ so that samples from the target subclass $C_f$ are misclassified at the superclass level, while preserving the model’s performance on all other subclasses. We also define the \emph{retain-superclass set} to consist of subclasses that belong to the same superclass of $C_f$, allowing for a more nuanced assessment of forgetting within semantically similar concepts. Notably, the \emph{retain-superclass set} is a subset of the \emph{overall retain set}. For the test set $\mathcal{D}_{\text{test}}$, we define the following subsets for evaluation:
\begin{itemize}[noitemsep,topsep=0pt,leftmargin=1em]
    \item $\mathcal{D}_{\text{forget}}$: test samples from target subclass $C_f$ (forget set);
    \item $\mathcal{D}_{\text{retain-overall}}$: test samples not belonging to $C_f$ (retain set);
    \item $\mathcal{D}_{\text{retain-super}}$: subset of $\mathcal{D}_{\text{retain-overall}}$ with samples from other subclasses in the same superclass of $C_f$.
\end{itemize}
This hierarchy formulation is intentionally fine-grained. Unlike standard class-level forgetting, \ERASE{} asks the model to suppress one subclass while preserving sibling subclasses that share the same superclass decision boundary. $\mathcal{D}_{\text{forget}}$ and $\mathcal{D}_{\text{retain-overall}}$ form a partition of $\mathcal{D}_{\text{test}}$, while $\mathcal{D}_{\text{retain-super}} \subset \mathcal{D}_{\text{retain-overall}}$ enables us to assess retention performance on semantically adjacent concepts within the most closely related categories to the forgotten subclass. Our objective is to minimize accuracy on $\mathcal{D}_{\text{forget}}$ while minimizing degradation of accuracy on the retain sets $\mathcal{D}_{\text{retain-super}}$ and $\mathcal{D}_{\text{retain-overall}}$.

\section{ERASE: Erasure via Reconstructive Adversarial Signal Editing}
\label{sec:erase}

{
\setlength{\textfloatsep}{4pt}
\setlength{\intextsep}{4pt}

\begin{algorithm}[t]
\footnotesize
\caption{ERASE}
\label{alg:erase}
\begin{algorithmic}[1]

\State \textbf{Require:} Model $M_\theta$, data $\mathcal{D}_{\text{train}}, \mathcal{D}_{\text{test}}$, target subclass $C_f$, forgetting weights $\omega_u,\omega_r > 0$, learning rate $\eta$
\State \textbf{Return:} Forget accuracy $\mathcal{A}_{\text{forget}}$, retain-superclass accuracy $\mathcal{A}_{\text{retain-super}}$, retain-overall accuracy $\mathcal{A}_{\text{retain-overall}}$

\State \textbf{Phase 1: Offline Generation of Subclass-Specific Gradient w.r.t. Input (one-time pre-computation)}
\State Initialize map $\mathbf{P} \gets \{\}$

\For{each subclass $C_i \in \mathcal{D}_{\text{train}}$}
    \State $\mathcal{D}_i \gets \{(x, y) \in \mathcal{D}_{\text{train}} \mid \text{subclass}(x)=C_i\}$
    , \, $\boldsymbol{\delta}_i \gets 0$
    \For{each sample $(x_j, y_j) \in \mathcal{D}_i$}
        \State $g \gets \nabla_{x_j} \mathcal{L}(M_\theta(x_j), y_j)$
        \State $\boldsymbol{\delta}_i \gets \boldsymbol{\delta}_i + \text{sign}(g)$
    \EndFor
    \State $\mathbf{P}[C_i] \gets \boldsymbol{\delta}_i / |\mathcal{D}_i|$
\EndFor

\State \textbf{Phase 2: Online Perturbation-Based Forgetting at Inference Time}
\State \hspace{5em}\textcolor{gray}{$\triangleright$} \textit{Step 1: Define Forget and Retain Directions}

\State $S_f \gets$ superclass of $C_f$, \, 
$\mathcal{R}_r \gets \{ C_i \in S_f \mid C_i \neq C_f \}$
\State $\boldsymbol{\delta}_{\text{forget}} \gets \mathbf{P}[C_f] / \|\mathbf{P}[C_f]\|_F$
\State $\boldsymbol{\delta}_{\text{retain}} \gets \sum_{C_i \in \mathcal{R}_r} \mathbf{P}[C_i] / |\mathcal{R}_r|$
\State $\boldsymbol{\delta}_{\text{retain}} \gets \boldsymbol{\delta}_{\text{retain}} / \|\boldsymbol{\delta}_{\text{retain}}\|_F$

\State \hspace{5em}\textcolor{gray}{$\triangleright$} \textit{Step 2: Optimize Perturbation Scaling Factors}
\State Initialize $\epsilon_f, \epsilon_r > 0$
\State $\mathcal{D}_{\text{opt}} \gets \{(x,y_{\text{super}})\in \mathcal{D}_{\text{train}} \mid \text{superclass}(x) = S_f\}$

\For{each optimization epoch}
    \For{batch $\{(x_j, y_j)\}_{j=1}^B$ in $\mathcal{D}_{\text{opt}}$}
        \State $\boldsymbol{p} \gets \epsilon_f \cdot \boldsymbol{\delta}_{\text{forget}} - \epsilon_r \cdot \boldsymbol{\delta}_{\text{retain}}$
        \State $x'_j \gets x_j + \boldsymbol{p} \cdot \|x_j\|_F$, 
        \, $o'_j \gets M_\theta(x'_j)$
        \State $u_j \gets \mathbf{1}_{[\text{subclass}(x_j) = C_f]}$
        \State $L_{\text{opt}} \gets \frac{1}{B} \sum_{j=1}^B (\omega_r (1-u_j) - \omega_u u_j)\mathcal{L}(o'_j, y_j)$
        \State $(\epsilon_f, \epsilon_r) \gets (\epsilon_f, \epsilon_r) - \eta \nabla_{(\epsilon_f, \epsilon_r)} L_{\text{opt}}$
    \EndFor
\EndFor

\State \textbf{Evaluation (Conditional Perturbation)}
\State $\boldsymbol{p}_{\text{final}} \gets \epsilon_f \cdot \boldsymbol{\delta}_{\text{forget}} - \epsilon_r \cdot \boldsymbol{\delta}_{\text{retain}}$
\State Define Perturb$(x) = x + \boldsymbol{p}_{\text{final}} \|x\|_F$

\State Define ConditionalPerturb$(x) =$
\State \hspace{2em} \textbf{if} $\arg\max M_\theta(x) = S_f$ \textbf{ then return } $\text{Perturb}(x)$
\State \hspace{2em} \textbf{else return } $x$

\State $\mathcal{D}_{\text{forget}} \gets \{(x,y) \in \mathcal{D}_{\text{test}} \mid \text{subclass}=C_f\}$
\State $\mathcal{D}_{\text{retain-super}} \gets \{(x,y) \mid \text{subclass} \in \mathcal{R}_r\}$
\State $\mathcal{D}_{\text{retain-overall}} \gets \{(x,y) \mid \text{subclass} \neq C_f\}$

\State $\mathcal{A}_{\text{forget}} \gets \text{Accuracy}(M_\theta, \mathcal{D}_{\text{forget}}, \text{ConditionalPerturb})$
\State $\mathcal{A}_{\text{retain-super}} \gets \text{Accuracy}(M_\theta, \mathcal{D}_{\text{retain-super}}, \text{ConditionalPerturb})$
\State $\mathcal{A}_{\text{retain-overall}} \gets \text{Accuracy}(M_\theta, \mathcal{D}_{\text{retain-overall}}, \text{ConditionalPerturb})$

\end{algorithmic}
\end{algorithm}
}
We now introduce \ERASE{}, our novel framework for inference-time forgetting. Unlike existing approaches for discriminative models that rely on model retraining, \ERASE{} achieves forgetting by strategically perturbing the model inputs during inference, thereby modifying its predicted output without altering its parameters. 

\subsection{Overview of \ERASE{}}
\ERASE{} is a model update-free forgetting framework that operates purely at inference time, requiring no update of model parameters. It proceeds in two main phases: (i) an \emph{offline phase} pre-computing subclass-specific gradient directions in the input space and storing them, followed by (ii) an \emph{online phase} performing forgetting for a target subclass by applying an optimized superclass-specific perturbation vector to inference inputs. This perturbation is shared across all inputs from the same superclass and applied only if needed. Figure~\ref{fig:ERASE} illustrates the overall framework.

\vspace{0.25em}\noindent\textbf{Offline phase: Subclass-specific input-gradient direction computation.}
We compute an average input-gradient direction for each subclass $C_i$ in the train set $\mathcal{D}_{\text{train}}$. We denote the subset of samples from subclass $C_i$ as
\begin{align*}
\mathcal{D}_i = \{ (x, y_{\text{super}}) \in \mathcal{D}_{\text{train}} \mid \text{subclass of } x \text{ is } C_i \},
\end{align*}
where $y_{\text{super}}$ denotes the true superclass label of sample $x$.
The average input-gradient direction, specific to each $C_i$, is computed using a Fast Gradient Sign Method (FGSM)-style formulation~\citep{goodfellow2015explaining}, yielding a signed and averaged vector:
\begin{align}
    \mathbf{P}[C_i] = \tfrac{1}{|\mathcal{D}_i|} \sum\limits_{(x_j, y_j) \in \mathcal{D}_i} \text{sign} \left( \nabla_{x_j} \mathcal{L}(M_\theta(x_j), y_j) \right),
\end{align}
where $\mathcal{L}$ is a supervised loss, e.g., cross-entropy. The precomputed signed gradients $\mathbf{P}$ act as semantic ``fingerprints'' for each subclass in input-space and are stored for downstream forgetting use. The subclass annotations $C_i$ in $\mathcal{D}_i$ are solely used to identify the target samples to be forgotten.

\textbf{Online phase: Targeted forgetting via optimized perturbations.}
The online phase begins with the selection of a subclass $C_f$ to unlearn. We let $S_f$ denote the superclass to which the target subclass $C_f$ belongs, and $\mathcal{R}_r = \{ C_i \in S_f \mid C_i \neq C_f \}$ the set of other subclasses in $S_f$. Our goal is to construct a perturbation $\boldsymbol{p}$ that selectively degrades model performance on $C_f$ (forget set), while preserving accuracy on $\mathcal{R}_r$ (retain set). This requires accessing the pre-computed input-gradients $\mathbf{P}$ from the offline phase. We define the following input-gradient directions.\\
\textbullet{} The forget direction as the normalized gradient for $C_f$:
\[
    \boldsymbol{\delta}_{\text{forget}} = \tfrac{\mathbf{P}[C_f]}{\|\mathbf{P}[C_f]\|_F};\vspace{-.25em}
\]
\textbullet{} The retain direction as the average normalized gradient across all other subclasses $\mathcal{R}_r$ in $S_f$:
\[
    \boldsymbol{\delta}_{\text{retain}} = \sum_{C_i \in \mathcal{R}_r} \tfrac{\mathbf{P}[C_i]}{\|\sum_{C_j \in \mathcal{R}_r} \mathbf{P}[C_j]\|_F}.\vspace{-.25em}
\]
Normalization ensures that each direction contributes based purely on its semantic alignment rather than raw gradient magnitude, mitigating bias from subclass size or gradient scales. This allows the final perturbation to reflect true directional intent rather than being skewed by disproportionately large gradients.
We then construct the perturbation vector as a linear combination of these two directions:
\begin{align}
    \boldsymbol{p} = \epsilon_f \cdot \boldsymbol{\delta}_{\text{forget}} - \epsilon_r \cdot \boldsymbol{\delta}_{\text{retain}}, \label{eqn:p_final}
\end{align}
where $\epsilon_f, \epsilon_r > 0$ are scaling factors optimized in the next step. The positive coefficient for $\boldsymbol{\delta}_{\text{forget}}$ pushes samples away from the target subclass representation, while the negative coefficient for $\boldsymbol{\delta}_{\text{retain}}$ pulls samples toward non-forgotten peer subclasses, improving retention. After scaling, this perturbation vector $\boldsymbol{p}$ will be applied later to each input $x$ belonging to $S_f$. 
Specifically, we scale the perturbation $\boldsymbol{p}$ for each input $x$ by its Frobenius norm $\|x\|_F$, ensuring that the perturbation magnitude adapts proportionally to the input’s scale. This design maintains invariance to variations in pixel-level intensity across inputs, and is inspired by adversarial robustness literature, where perturbations are often norm-bounded relative to the input as $\|x'-x\|_p \leq \epsilon \, \|x\|_p$ to preserve perceptual consistency and prevent over-perturbation of low-magnitude inputs~\citep{carlini2017towards, moosavi2016deepfool}.
For this, we define input perturbation function $\text{Perturb}(\cdot)$ as
\begin{align*}
    \text{Perturb}(x) = x + \boldsymbol{p} \, \|x\|_F. 
\end{align*}
Since $\boldsymbol{\delta}_{\text{forget}}, \boldsymbol{\delta}_{\text{retain}}$ are pure directional, strength of $\boldsymbol{p}$ can be adjusted via $\epsilon_f, \epsilon_r$ in the optimization phase.
  
\vspace{0.25em}\noindent\textbf{Optimization of perturbation scaling factors.}
To balance forgetting and retention, we optimize $(\epsilon_f, \epsilon_r)$ using a weighted loss on a subset of training data $\mathcal{D}_{\text{opt}} \subseteq \mathcal{D}_{\text{train}}$ consisting samples from superclass $S_f$. To achieve the balance, we maximize the loss on the forget training samples perturbed by $\boldsymbol{p}$ and minimize the loss on the remaining training samples in $\mathcal{D}_{\text{opt}}$ perturbed by $\boldsymbol{p}$. For each sample, we define a binary indicator $u_j = 1$ if $x_j$ belongs to the forget set $C_f$, and $u_j = 0$ otherwise.
The optimization loss is:
\begin{align}
    L_{\text{opt}} = \tfrac{1}{|\mathcal{D}_{\text{opt}}|} \sum_{(x_j, y_j) \in \mathcal{D}_{\text{opt}}} 
    &\left( \omega_r (1 - u_j) - \omega_u u_j \right) \times \mathcal{L}(M_\theta(\text{Perturb}(x_j)), y_j), \label{eqn:loss}
\end{align}
where $\omega_u, \omega_r > 0$ are user-defined weights emphasizing forgetting and retention respectively. For instance, when the forget and retain training sets are significantly unbalanced, the weights $\omega_u, \omega_r$
can be adjusted to compensate for this imbalance, ensuring that both objectives contribute meaningfully to the loss in~\eqref{eqn:loss}. The formulation in~\eqref{eqn:loss} encourages the model to perform poorly on $C_f$ via negative weighting while reinforcing correct predictions on other classes via positive weighting. Notably, no model parameters are updated and only the perturbation vector is learned.

\vspace{0.25em}\noindent\textbf{Inference-time evaluation.}
The final step is evaluation of the unlearning effect on the three relevant subsets of the test set, defined earlier as $\mathcal{D}_{\text{forget}}$, $\mathcal{D}_{\text{retain-overall}}$, and $\mathcal{D}_{\text{retain-super}}$.
For this evaluation, $M_\theta$'s accuracy on a dataset $\mathcal{D}$ under the input perturbation function $\text{Perturb}(\cdot)$ is defined as
\begin{align*}
    & \text{Accuracy}(M_\theta, \mathcal{D}, \text{Perturb}) = \tfrac{1}{|\mathcal{D}|} \sum\nolimits_{(x, y_{\text{super}}) \in \mathcal{D}} \mathbf{1}_{\left[ \arg\max M_\theta(\text{Perturb}(x)) = y_{\text{super}} \right]}.
\end{align*}
For each test sample $x$, we adopt the following conditional perturbation strategy:
\begin{enumerate}[noitemsep,topsep=0pt]
    \item The model $M_\theta$ first predicts the superclass label $\hat{y}_{\text{super}}$ without any perturbation.
    \item If the sample is predicted to belong to the target superclass $\hat{y}_{\text{super}} = S_f$, we apply the perturbation $\text{Perturb}(x)$ and re-infer the class.
    \item If not, the model’s original prediction is retained, i.e., no perturbation is applied.
\end{enumerate}
This selective perturbation ensures that samples unrelated to the forget subclass remain untouched, preserving global model performance and ensuring that the perturbation effect is localized to the relevant semantic region. Successful unlearning corresponds to low accuracy on $\mathcal{D}_{\text{forget}}$, indicating successful forgetting; and high accuracy on $\mathcal{D}_{\text{retain-super}}$ and $\mathcal{D}_{\text{retain-overall}}$, indicating effective retention. Our proposed method is summarized in Algorithm~\ref{alg:erase}.

\vspace{0.25em}\noindent\textbf{Broader applicability.}
\ERASE{} remains applicable even in the absence of an explicit hierarchy. Users can define custom semantic groupings based on domain knowledge or task-specific criteria, such as clustering visually or functionally related classes. For example, in Tiny ImageNet, we manually construct superclasses by clustering semantically related subclasses using the WordNet ontology. In real-world datasets like HAR, where no hierarchy exists, we define superclasses as activity types (e.g., walking) and subclasses as individual subjects performing those activities. The grouping should be semantically or task coherent and treated as part of the deployment design, guided by dataset ontology, domain knowledge, or task-specific structure. Poorly chosen groupings may make the forgetting task less meaningful or artificially easy, e.g., if unrelated subclasses are placed under the same superclass.

\subsection{Computational efficiency}\label{sec:comp_eff}
A key advantage of \ERASE{} lies in its lightweight, inference-time deployment that avoids any retraining.

\textbf{Offline precomputation.} The offline phase of \ERASE{} consists of computing input-gradient directions for each subclass. For a given input $x$, this requires a single backpropagation step to compute the gradient $\nabla_x \mathcal{L}(M_\theta(x), y_{\text{super}})$. Computing gradients with respect to the input involves significantly fewer dimensions than computing gradients with respect to the model parameters. Since this operation is performed once per training sample and only aggregated per subclass, the overall cost remains linear in the size of the training set. Moreover, this cost is amortized over all future forgetting operations, as the same set of subclass-specific gradient directions can be reused.\\
\textbf{Online perturbation optimization.} The online phase only involves optimizing two scalar weights $(\epsilon_f, \epsilon_r)$ over the subset $\mathcal{D}_{\text{opt}}$ of training samples within the superclass $S_f$ of the forget subclass $C_f$. This results in a highly efficient, low-dimensional optimization problem, typically converging in fewer than $10$ epochs. Crucially, no model gradients or weight updates are required, and the model $M_\theta$ remains frozen throughout. Compared to retraining-based or continual fine-tuning methods, \ERASE{} thus reduces memory and compute overhead by several orders of magnitude.\\
\textbf{Inference overhead.} At deployment, the inference cost of \ERASE{} is negligible. For any test input, the model first performs a standard forward pass to determine its predicted superclass $\hat{y}_{\text{super}}$. Only if $\hat{y}_{\text{super}} = S_f$, the precomputed perturbation is applied before a second forward pass. 

\subsection{Theoretical analysis}
\label{sec:theory}

We rigorously analyze \ERASE{}, characterizing sufficient conditions under which the method provably forgets $C_f$ while retaining non-forgotten subclasses in $\mathcal{R}_r$.

\textbf{Preliminaries.} Let $g(x) := \nabla_x \mathcal{L}(M_\theta(x), S_f)$ denote the gradient of the loss w.r.t. input $x$ for the target superclass $S_f$. 
Let $f_\theta(x) \in \mathbb{R}^{|\mathcal{S}|}$ denote the vector of logits produced by the model. 
\ERASE{} applies perturbations only when the model predicts the target superclass. We define this \emph{gate event} as $\mathcal{E}(x) := \{ \arg\max_{S \in \mathcal{S}} f_\theta(x)_S = S_f \}$.
For any input $x$ satisfying $\mathcal{E}(x)$, we define the \emph{forget superclass margin} as the gap between the target superclass logit and the next best superclass: $m_f(x) := [f_\theta(x)]_{S_f} - \max_{S \neq S_f} [f_\theta(x)]_{S}$.
Our analysis relies on the following assumptions.

\begin{assumption}
\label{ass:smoothness}
There exists $L > 0$ such that for any perturbation $\Delta = \boldsymbol{p}\|x\|_F$:
\begin{small}
    \[\left| \mathcal{L}(M_\theta(x + \Delta), S_f) - \mathcal{L}(M_\theta(x), S_f) - \langle g(x), \Delta \rangle \right| \leq \tfrac{L}{2} \|\Delta\|_2^2.\]
\end{small}
\end{assumption}

\begin{assumption}
\label{ass:bounded}
For all $x$ satisfying $\mathcal{E}(x)$: $|\langle g(x), \boldsymbol{\delta}_{\text{forget}} \rangle| \leq B_f$ and $|\langle g(x), \boldsymbol{\delta}_{\text{retain}} \rangle| \leq B_r$, for some constants $B_f, B_r > 0$.
\end{assumption}

\begin{assumption}
\label{ass:alignment}
Phase-1 exhibits strictly positive empirical alignment:
$
\hat{\alpha}_{f,n}
:=
\frac{1}{n}\sum_{x \in D_{C_f}}
\big\langle g(x), \boldsymbol{\delta}_{\mathrm{forget}} \big\rangle
\ge \mu_f,
$ and
$
\hat{\alpha}_{r,m}
:=
\frac{1}{m}\sum_{x \in D_{R_r}}
\big\langle g(x), \boldsymbol{\delta}_{\mathrm{retain}} \big\rangle
\ge \mu_r,
$
for some $\mu_f, \mu_r > 0$, where $\{x_i^{(f)}\}_{i=1}^n \sim D_{C_f}$ and $\{x_i^{(r)}\}_{i=1}^m \sim D_{R_r}$ are the Phase-1 datasets.
\end{assumption}

Let $P_{C_f}$ and $P_{R_r}$ denote distributions of forget and retain subclasses. Fix $\delta \in (0,1)$ and set $\delta_f=\delta_r=\delta/2$. We define $\varepsilon_f(n) := B_f \sqrt{\frac{\log(2/\delta_f)}{2n}}$ and gate probabilities $p_{\mathcal{E},f} := \Pr(\mathcal{E}(X)| X\sim P_{C_f})$ (analogously $\varepsilon_r(m), p_{\mathcal{E},r}$).
We define the population alignment lower bounds:
\begin{small}
$
\alpha_f 
:= 
\frac{
\hat{\alpha}_{f,n}
- B_f(1-p_{\mathcal{E},f})
- 
\varepsilon_f(n)
}{
p_{\mathcal{E},f}
},
\quad 
\alpha_r 
:=
\frac{
\hat{\alpha}_{r,m}
- B_r(1-p_{\mathcal{E},r})
-
\varepsilon_r(m)
}{
p_{\mathcal{E},r}
}.
$
\end{small}
We define the events $\mathcal{G}_f$ and $\mathcal{G}_r$ as the cases where the true conditional expected alignment exceeds $\alpha_f$ and $\alpha_r$ respectively. {Let $K:=|\mathcal{S}|$ denote the number of superclasses.

\begin{theorem}
\label{thm:main}
Let Assumptions~\ref{ass:smoothness}-\ref{ass:alignment} hold and the computed Phase-1 gradients satisfy 
$
\hat{\alpha}_{f,n} > B_f (1 - p_{\mathcal{E},f}) + \varepsilon_f(n) \quad \text{and} \quad \hat{\alpha}_{r,m} > B_r (1 - p_{\mathcal{E},r}) + \varepsilon_r(m)
$.
Then, with probability at least $1-\delta$ over the Phase-1 datasets, the events $\mathcal{G}_f$ and $\mathcal{G}_r$ hold with strictly positive bounds $\alpha_f, \alpha_r > 0$.
Conditioned on this event, the following guarantees apply simultaneously.\\
   (i) Forgetting: For any $X \sim P_{C_f}$ satisfying $\mathcal{E}(X)$ and $\langle g(X), \boldsymbol{\delta}_{\mathrm{forget}}\rangle \geq \tau_f$ with $0 < \tau_f < \alpha_f$, the prediction changes ($\arg\max M_\theta(\text{Perturb}(X)) \neq S_f$) whenever
    \begin{equation}
        \mathcal{L}(M_\theta(X), S_f)
        +
        \|X\|_F (\epsilon_f \tau_f - \epsilon_r B_r)
        -
        \frac{L}{2} \|X\|_F^2 \|\boldsymbol{p}_{\mathrm{final}}\|_2^2
        >
        \log K. 
    \end{equation}
   (ii) Retention: For any $X \sim P_{R_r}$ satisfying $\mathcal{E}(X)$ and $\langle g(X), \boldsymbol{\delta}_{\mathrm{retain}}\rangle \geq \tau_r$ with $0 < \tau_r < \alpha_r$, the prediction is preserved ($\arg\max M_\theta(\text{Perturb}(X)) = S_f$) whenever
    \begin{equation}
        \mathcal{L}(M_\theta(X), S_f)
        -
        \|X\|_F (\epsilon_r \tau_r - \epsilon_f B_f)
        +
        \frac{L}{2} \|X\|_F^2 \|\boldsymbol{p}_{\mathrm{final}}\|_2^2
        <
        \log 2.
    \end{equation}
Moreover, the alignment conditions hold with
\begin{align*}
    \Pr(\langle g(X), \boldsymbol{\delta}_{\mathrm{forget}}\rangle \geq \tau_f \mid X \sim P_{C_f}, \mathcal{E}(X)) &\geq \frac{\alpha_f - \tau_f}{B_f - \tau_f}, \\
    \Pr(\langle g(X), \boldsymbol{\delta}_{\mathrm{retain}}\rangle \geq \tau_r \mid X \sim P_{R_r}, \mathcal{E}(X)) &\geq \frac{\alpha_r - \tau_r}{B_r - \tau_r}.
\end{align*}
\end{theorem}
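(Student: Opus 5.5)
The proof splits into a concentration step, two deterministic loss-to-prediction steps, and a reverse-Markov step for the probability statements.

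\textbf{Step 1: concentration and the gate decomposition.} Write $Z_f := \langle g(X), \boldsymbol{\delta}_{\mathrm{forget}}\rangle$ for $X\sim P_{C_f}$ and $\mu_f^{\mathrm{pop}} := \mathbb{E}[Z_f]$; define $Z_r$ and $\mu_r^{\mathrm{pop}}$ analogously for $X\sim P_{R_r}$. We would treat $\boldsymbol{\delta}_{\mathrm{forget}}$ as fixed, so $Z_f$ is a function of the sample alone, and assume it is bounded in an interval of width $B_f$. This is the reading under which the stated $\varepsilon_f(n)$ is exactly the one-sided Hoeffding deviation. Hoeffding then gives $\mu_f^{\mathrm{pop}} \ge \hat\alpha_{f,n} - \varepsilon_f(n)$ with probability at least $1-\delta_f$, and similarly for the retain side. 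A union bound yields both with probability at least $1-\delta$.

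Next split the mean over the gate:
\[
\mu_f^{\mathrm{pop}} = p_{\mathcal{E},f}\,\mathbb{E}[Z_f\mid\mathcal{E}] + (1-p_{\mathcal{E},f})\,\mathbb{E}[Z_f\mid\mathcal{E}^c].
\]
We bound the off-gate term by $B_f$, extending Assumption~\ref{ass:bounded} to non-gated inputs or assuming $|Z_f|\le B_f$ globally. Rearranging gives $\mathbb{E}[Z_f\mid \mathcal{E}] \ge \alpha_f$, which is the event $\mathcal{G}_f$. The hypothesis $\hat\alpha_{f,n} > B_f(1-p_{\mathcal{E},f})+\varepsilon_f(n)$ is precisely $\alpha_f>0$. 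The retain side is identical.

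\textbf{Step 2: loss bounds under the perturbation.} Apply Assumption~\ref{ass:smoothness} with $\Delta = \boldsymbol{p}_{\mathrm{final}}\|X\|_F$. Then
\[
\langle g,\Delta\rangle = \|X\|_F\bigl(\epsilon_f \langle g,\boldsymbol{\delta}_{\mathrm{forget}}\rangle - \epsilon_r\langle g,\boldsymbol{\delta}_{\mathrm{retain}}\rangle\bigr).
\]
\emph{Forgetting.} Use $\langle g,\boldsymbol{\delta}_{\mathrm{forget}}\rangle\ge\tau_f$ and $\langle g,\boldsymbol{\delta}_{\mathrm{retain}}\rangle\le B_r$ to lower-bound $\langle g,\Delta\rangle \ge \|X\|_F(\epsilon_f\tau_f-\epsilon_rB_r)$. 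This gives
\[
\mathcal{L}(M_\theta(X+\Delta),S_f) \ge \mathcal{L}(M_\theta(X),S_f) + \|X\|_F(\epsilon_f\tau_f-\epsilon_rB_r) - \tfrac{L}{2}\|X\|_F^2\|\boldsymbol{p}_{\mathrm{final}}\|_2^2,
\]
so under the hypothesis of (i) the perturbed loss exceeds $\log K$.

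\emph{Retention.} Symmetrically, use $\langle g,\boldsymbol{\delta}_{\mathrm{retain}}\rangle\ge\tau_r$ and $\langle g,\boldsymbol{\delta}_{\mathrm{forget}}\rangle\le B_f$ to get an upper bound on the perturbed loss, which is then below $\log 2$.

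\textbf{Step 3: from loss to prediction.} With cross-entropy, $\mathcal{L} = -\log \mathrm{softmax}(f)_{S_f}$.
\begin{itemize}
\item For (i): if $S_f$ were still the argmax, its softmax probability would be at least $1/K$, so $\mathcal{L}\le\log K$. Contrapositively, $\mathcal{L}>\log K$ forces $\arg\max\ne S_f$.
\item For (ii): $\mathcal{L}<\log 2$ means $\mathrm{softmax}_{S_f}>1/2$, which makes $S_f$ the strict argmax.
\end{itemize}

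\textbf{Step 4: probability bounds.} On $\mathcal{G}_f$, conditioned on $\mathcal{E}$, we have $Z_f\le B_f$ and $\mathbb{E}[Z_f]\ge\alpha_f$. A reverse Markov argument gives
\[
\alpha_f \le \mathbb{E}[Z_f] \le B_f\Pr(Z_f\ge\tau_f) + \tau_f\bigl(1-\Pr(Z_f\ge\tau_f)\bigr),
\]
hence $\Pr(Z_f\ge\tau_f)\ge(\alpha_f-\tau_f)/(B_f-\tau_f)$, and likewise for the retain side.

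\textbf{Main obstacle.} The delicate step is Step 1. The theorem's phrasing mixes the empirical alignment, which is computed over all of $D_{C_f}$ (gated or not), with conditional-on-gate population quantities. We also need to justify treating $\boldsymbol{\delta}_{\mathrm{forget}}$ as independent of the sample, even though it is built from the same Phase-1 data. I would first try assuming it is fixed (computed from a disjoint split) and state this explicitly. I would also fix the Hoeffding range convention so the constant matches $\varepsilon_f(n)$, noting that a two-sided range $[-B_f,B_f]$ would cost a factor of 2.
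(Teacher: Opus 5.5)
Your proposal is correct under the explicit assumptions you state, and it follows the paper's proof essentially step for step. The shared steps are Hoeffding plus total expectation (off-gate term bounded by $B_f$) and a union bound; the smoothness bound combined with the alignment and bounded-projection estimates; the cross-entropy thresholds $\log K$ and $\log 2$; and Markov's inequality on $B_f - Z_f$, which is equivalent to your reverse-Markov computation.

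The caveats you raise for Step 1 are genuine gaps that the paper passes over silently, so your explicit assumptions are what make the argument rigorous. The paper asserts $Z_f\in[-B_f,B_f]$ even though Assumption~\ref{ass:bounded} covers only gated inputs. It applies Hoeffding even though $\boldsymbol{\delta}_{\mathrm{forget}}$ is built from the same Phase-1 samples. For the range $[-B_f,B_f]$, its $\varepsilon_f(n)$ is half the two-sided Hoeffding deviation. One small nit: under your width-$B_f$ reading, the stated $\varepsilon_f(n)$ is the two-sided deviation rather than exactly the one-sided one, though it remains a valid, conservative one-sided bound.
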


Theorem~\ref{thm:main} establishes two critical conditions for ERASE. First, certified forgetting is feasible only when the subclass-specific gradient alignment $\hat{\alpha}$ is distinct enough to overpower worst-case noise from model errors. Second, the inequalities reveal a geometric trade-off mediated by the curvature $L$: the linear terms ($\epsilon_f, \epsilon_r$) drive the necessary behavioral shift, while the quadratic penalty $\|\boldsymbol{p}_{\mathrm{final}}\|_2^2$ limits perturbation magnitude. The forgetting condition is expressed through a cross-entropy threshold: pushing the post-perturbation loss above $\log K$ guarantees that $S_f$ cannot remain the argmax, while keeping the post-perturbation retain loss below $\log 2$ guarantees that $S_f$ remains the unique argmax. These thresholds are conservative but easy to verify, since $\mathcal{L}(M_\theta(X),S_f)$ is already computed during training/evaluation and $K$ is fixed by the number of superclasses. \ERASE{} succeeds when the optimized scalings $(\epsilon_f, \epsilon_r)$ are large enough to raise the forget loss beyond the $\log K$ threshold, yet balanced enough to satisfy the retain-loss threshold (see Appendix~\ref{sec:appendix_theory} for proof details).

\section{Experiments}
\label{sec:exp}

\begin{figure*}[!h]
    \centering
    \includegraphics[width=0.9\textwidth]{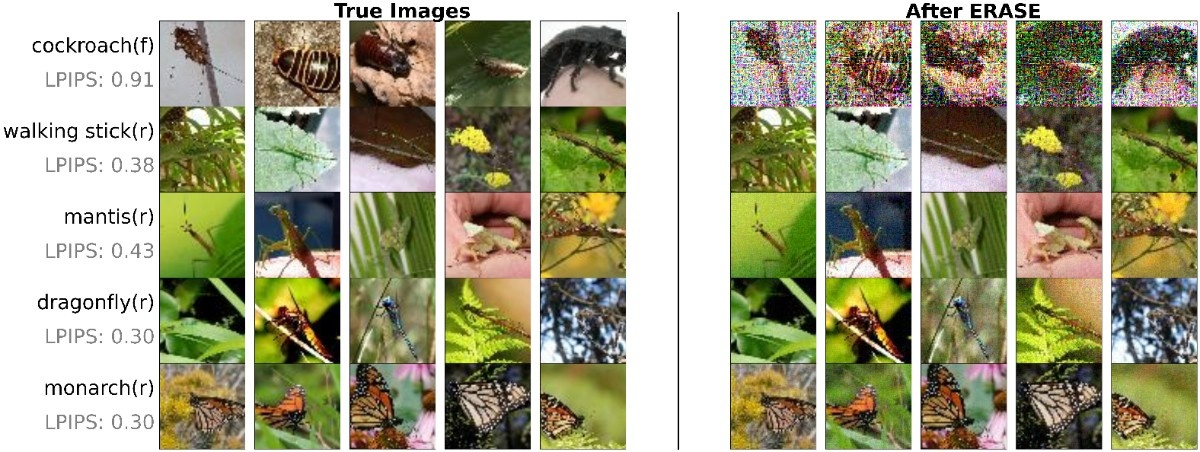}
    \vspace{-.5em}
    \caption{Selective corruption via \ERASE{} on the \textsc{insects} superclass containing the \texttt{cockroach} subclass in Tiny ImageNet. For each subclass, 5 representative test images are shown before (left) and after (right) perturbation. \texttt{Cockroach} images become noisier, disrupting predictions, while sibling subclasses remain largely unaffected (as also confirmed by LPIPS scores).}
    \label{fig:image_comp}
\end{figure*}

\begin{figure*}[!h]
    \centering
    \includegraphics[width=0.9\textwidth]{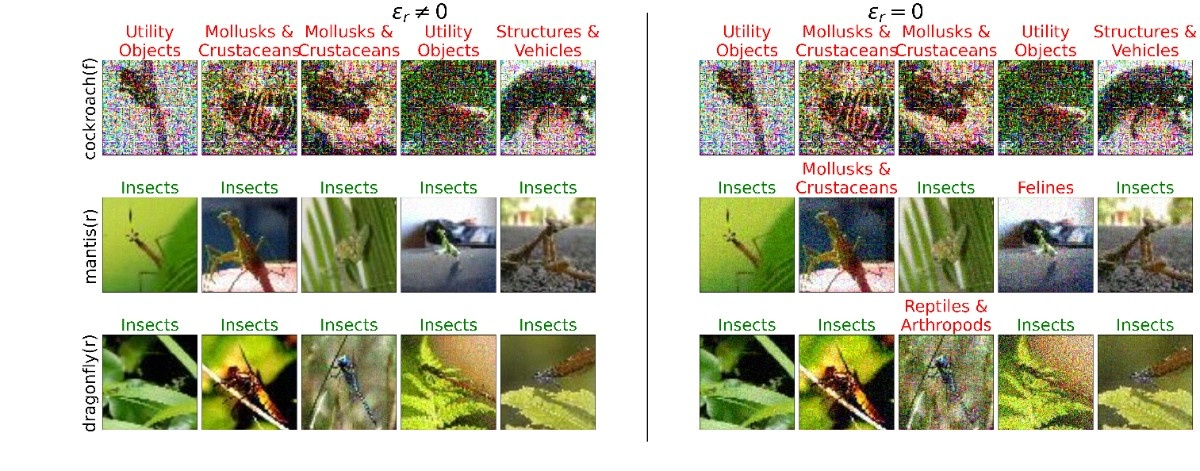}
    \caption{Effect of the retain direction $\boldsymbol{\delta}_{\text{retain}}$ in ERASE. With $\epsilon_r \neq 0$ (left), \texttt{cockroach} predictions are disrupted while retain classes are preserved. Without $\epsilon_r \neq 0$ (right), forgetting remains but collateral errors increase.}
    \label{fig:image_comp_2}
\end{figure*}

\textbf{Datasets and settings.}
We benchmark \ERASE{} across four diverse datasets, each reflecting different input modalities and subclass–superclass relationships:
\begin{itemize}[noitemsep,topsep=0pt,leftmargin=1em]
    \item \textbf{CIFAR-100}~\citep{krizhevsky2009learning}: Comprises $100$ fine-grained classes (subclasses) grouped into $20$ coarse-grained categories (superclasses), as defined in the original dataset. We use ResNet18~\citep{he2016deep} and ViT-B/16~\citep{dosovitskiy2021image} as pre-trained backbone models for superclass-level classification.
    \item \textbf{Tiny ImageNet}~\cite{tiny-imagenet}: A subset of ImageNet with $200$ subclasses. We manually cluster them into $40$ semantic superclasses using WordNet hierarchies. ViT-B/16 serves as the backbone model.
    \item \textbf{Epileptic Seizure Recognition (ESZ)}~\citep{harunshimanto_epileptic_seizure_recognition}: An EEG-based classification with $5$ subclasses. Class~$1$ corresponds to subjects with epileptic seizures, while classes~$2$--$5$ are of non-seizure. These are grouped into two superclasses: \textit{seizure} and \textit{non-seizure}. A 3-layer feedforward neural network is employed.
    \item \textbf{Human Activity Recognition (HAR)}~\citep{anguita2013public}: A $6$-class activity recognition task based on temporal sensor data from wearable devices. We treat each activity (e.g., walking, sitting) as a distinct superclass, with subclasses defined by individual subjects performing that activity. We use an LSTM-based sequence model~\citep{hochreiter1997long} for this task.
\end{itemize}
In all cases, models are trained to predict superclass labels. Subclass annotations are used exclusively for constructing the forget and retain sets.

\textbf{Baselines.}
We compare \ERASE{} with the following representative unlearning methods.
\begin{itemize}[noitemsep,topsep=0pt,leftmargin=1em]
    \item \textbf{Gradient Ascent-Descent (GA+GD)}~\citep{graves2021amnesiac, thudi2022unrolling}: A gradient-based fine-tuning approach that maximizes loss on the forget set and minimizes it on the retain set.
    \item \textbf{Knowledge Distillation (KD)}~\citep{chundawat2023can}: A recent approach employs a dual-teacher framework, where an \emph{incompetent} teacher, trained to misclassify the forget set, induces forgetting, while a \emph{competent} teacher, trained on the retain set, preserves relevant knowledge.
    \item \textbf{DEcoupLEd Distillation To Erase (DELETE)}~\cite{zhou2025decoupled}: A decoupled distillation framework that separates forgetting and retention, one module suppressing forget set representations and the other reinforcing retain set knowledge to avoid objective interference.
    \item \textbf{Boundary Unlearning}~\citep{chen2023boundary}: A decision-boundary approach that shifts classifier boundaries away from forget samples while preserving retain class separation, achieved via local feature perturbations.
    \item \textbf{Entanglement-Reduced Mask \& Knowledge Transfer and Prohibition (ERM-KTP)}~\citep{lin2023erm}: A feature space method that masks entangled neurons to suppress forget related activations and prohibits relearning during retraining for targeted, irreversible forgetting.
\end{itemize}
For each dataset, we fine-tune the publicly available pretrained model for $5$ epochs to predict superclasses, and refer it as original model throughout the paper. 

\textbf{Evaluation metrics.}
We employ the following metrics: 
\begin{itemize}[noitemsep,topsep=0pt,leftmargin=1em]
    \item \textbf{Forget Accuracy} ($\mathcal{A}_{\text{forget}}$): Accuracy on test samples from the target forget subclass. Lower is better.
    \item \textbf{Retain Superclass Accuracy} ($\mathcal{A}_{\text{retain-super}}$): Accuracy on samples from sibling subclasses in the same superclass. Higher is better.
    \item \textbf{Retain Overall Accuracy} ($\mathcal{A}_{\text{retain-overall}}$): Accuracy on all other subclasses not equal to the target forget subclass. Higher is better.
    \item \textbf{Membership Inference Attack (MIA) Resistance}~\citep{sidiropoulos2025evaluating}: Computed on a balanced set of samples from the forget set (members) and the test set (non-members). A successful unlearning should make these distributions indistinguishable, indicated by low Area Under the Curve (AUC-ROC) and a negative difference between the True Positive Rate (TPR) and False Positive Rate (FPR).
\end{itemize}
Since MIA primarily measures forgetting efficacy for privacy, we use it in conjunction with retention accuracy metrics. For \ERASE{}, the attack evaluates the final model output $M_{\theta}(\text{ConditionalPerturb}(\cdot))$. We discuss additional privacy risks, including white-box limitations and timing side channels, in Appendix~\ref{sec:privacy_risks}.

\begin{table*}[!h]
    \centering
    \scriptsize 
    \setlength{\tabcolsep}{3pt} 
    \begin{adjustbox}{max width=\textwidth} 
    \begin{tabular}{llccccc}
    \toprule
    \textbf{Method} & \textbf{Metric} & \textbf{CIFAR-100/ResNet18} & \textbf{CIFAR-100/ViT} & \textbf{Tiny ImageNet} & \textbf{ESZ} & \textbf{HAR} \\
    \midrule

    \multirow{3}{*}{Original Model}
      & $\downarrow$ Forget & 82.83 $\pm$ 12.94 & 89.98 $\pm$ 9.71 & 83.47 $\pm$ 10.48 & 97.13 $\pm$ 4.45 & 97 $\pm$ 12 \\
      & $\uparrow$ Retain Superclass & 82.83 $\pm$ 9.08 & 89.98 $\pm$ 5.83 & 83.47 $\pm$ 6.40 & 99.48 $\pm$ 0.29 & 96 $\pm$ 0 \\
      & $\uparrow$ Retain Overall & 82.83 $\pm$ 0.13 & 89.98 $\pm$ 0.10 & 83.47 $\pm$ 0.05 & 97.13 $\pm$ 1.11 & -- \\

    \midrule

    \multirow{3}{*}{Knowledge Distillation~\citep{chundawat2023can}}
      & $\downarrow$ Forget & 27.39 $\pm$ 22.69 & 26.10 $\pm$ 25.99 & 14.32 $\pm$ 16.68 & 78.22 $\pm$ 37.96 & 87 $\pm$ 15 \\
      & $\uparrow$ Retain Superclass & 77.41 $\pm$ 11.17 & 80.89 $\pm$ 10.31 & \textbf{70.93 $\pm$ 9.33} & 98.35 $\pm$ 0.92 & 95 $\pm$ 5 \\
      & $\uparrow$ Retain Overall & 78.01 $\pm$ 0.59 & 81.27 $\pm$ 1.86 & 71.07 $\pm$ 1.32 & 95.99 $\pm$ 2.08 & -- \\

    \midrule

    \multirow{3}{*}{GA+GD~\citep{graves2021amnesiac, thudi2022unrolling}}
      & $\downarrow$ Forget & 25.76 $\pm$ 20.24 & 24.52 $\pm$ 23.99 & 10.52 $\pm$ 2.94 & 72.57 $\pm$ 36.79 & 81 $\pm$ 30 \\
      & $\uparrow$ Retain Superclass & \textbf{78.61 $\pm$ 8.81} & \textbf{83.98 $\pm$ 7.51} & 65.40 $\pm$ 12.71 & 97.33 $\pm$ 1.54 & \textbf{97 $\pm$ 7} \\
      & $\uparrow$ Retain Overall & 81.24 $\pm$ 0.63 & 85.46 $\pm$ 1.41 & 67.45 $\pm$ 3.87 & 96.40 $\pm$ 1.83 & -- \\

    \midrule

    \multirow{3}{*}{DELETE~\citep{zhou2025decoupled}}
      & $\downarrow$ Forget & 18.41 $\pm$ 11.57 & \textbf{1.22 $\pm$ 3.08} & \textbf{0.01 $\pm$ 0.14} & \textbf{17.00 $\pm$ 10.15} & \textbf{3 $\pm$ 5} \\
      & $\uparrow$ Retain Superclass & 40.57 $\pm$ 14.66 & 19.53 $\pm$ 20.95 & 9.24 $\pm$ 13.44 & 36.62 $\pm$ 31.77 & 82 $\pm$ 3 \\
      & $\uparrow$ Retain Overall & 77.69 $\pm$ 2.6 & 52.00 $\pm$ 22.47 & 34.03 $\pm$ 28.20 & 51.77 $\pm$ 24.18 & -- \\

    \midrule

    \multirow{3}{*}{Boundary Unlearning~\citep{chen2023boundary}}
      & $\downarrow$ Forget & \textbf{9.85 $\pm$ 18.09} & 9.90 $\pm$ 15.82 & 4.59 $\pm$ 7.89  & 94.65 $\pm$ 6.06 & 89 $\pm$ 24 \\
      & $\uparrow$ Retain Superclass & 26.61 $\pm$ 21.71 & 18.24 $\pm$ 21.40 & 40.45 $\pm$ 20.78 & 98.38 $\pm$ 0.91 & 96 $\pm$ 1 \\
      & $\uparrow$ Retain Overall & 60.69 $\pm$ 5.54 & 34.37 $\pm$ 21.06 & 57.73 $\pm$ 21.50 & 96.76 $\pm$ 1.57 & -- \\

    \midrule

    \multirow{3}{*}{ERM-KTP~\citep{lin2023erm}}
      & $\downarrow$ Forget & 16.19 $\pm$ 16.35 & 22.53 $\pm$ 21.06 & 15.48 $\pm$ 17.09 & 94.13 $\pm$ 9.65 & 92 $\pm$ 19 \\
      & $\uparrow$ Retain Superclass & 25.84 $\pm$ 19.19 & 60.03 $\pm$ 20.05 & 55.53 $\pm$ 20.00 & \textbf{99.26 $\pm$ 0.43} & 95 $\pm$ 1 \\
      & $\uparrow$ Retain Overall & 52.07 $\pm$ 11.09 & 59.32 $\pm$ 1.78 & 51.02 $\pm$ 2.08 & \textbf{97.88 $\pm$ 0.94} & -- \\

    \midrule

    \multirow{3}{*}{\ERASE{} (Proposed)}
      & $\downarrow$ Forget & 28.53 $\pm$ 17.82 & 27.93 $\pm$ 18.86 & 14.64 $\pm$ 15.94 & 71.09 $\pm$ 36.08 & 76 $\pm$ 23 \\
      & $\uparrow$ Retain Superclass & 76.01 $\pm$ 14.30 & 81.10 $\pm$ 9.66 & 68.80 $\pm$ 11.03 & 95.46 $\pm$ 2.64 & 93 $\pm$ 8 \\
      & $\uparrow$ Retain Overall & \textbf{81.73 $\pm$ 0.44} & \textbf{88.73 $\pm$ 0.24} & \textbf{82.76 $\pm$ 0.20} & 94.12 $\pm$ 2.81 & -- \\

    \bottomrule
    \end{tabular}
    \end{adjustbox}
    \caption{Performance of \ERASE{} and baseline unlearning methods across forgetting and retention accuracies.}
    \label{tab:acc}
\end{table*}

\begin{table*}[!h]
    \centering
    \scriptsize
    \setlength{\tabcolsep}{3pt}
    \begin{adjustbox}{max width=\textwidth}
    \begin{tabular}{llccccc}
    \toprule
    \textbf{Method} & \textbf{Metric} & \textbf{CIFAR-100/ResNet18} & \textbf{CIFAR-100/ViT} & \textbf{Tiny ImageNet} & \textbf{ESZ} & \textbf{HAR} \\
    \midrule

    \multirow{2}{*}{Original Model}
      & AUC-ROC & 0.98 & 0.98 & 0.96 & 0.63 & 0.62 \\
      & (TPR, FPR, $\Delta$) & (0.83, 0.03, 0.80) & (0.79, 0.03, 0.76) & (0.78, 0.04, 0.74) & (0.60, 0.30, 0.30) & (0.66, 0.43, 0.23) \\

    \midrule

    \multirow{2}{*}{Knowledge Distillation~\citep{chundawat2023can}}
      & AUC-ROC & 0.43 & 0.57 & 0.42 & 0.35 & \textbf{0.35} \\
      & (TPR, FPR, $\Delta$) & (0.00, 0.17, \textbf{-0.17}) & (0.00, 0.01, -0.01) & (0.05, 0.15, -0.10) & (0.00, 0.14, -0.14) & (0.28, 0.53, \textbf{-0.25}) \\

    \midrule

    \multirow{2}{*}{GA+GD~\citep{graves2021amnesiac, thudi2022unrolling}}
      & AUC-ROC & 0.33 & \textbf{0.48} & 0.38 & 0.42 & 0.40 \\
      & (TPR, FPR, $\Delta$) & (0.05, 0.17, -0.12) & (0.07, 0.01, 0.06) & (0.04, 0.14, -0.10) & (0.20, 0.31, -0.11) & (0.38, 0.54, -0.16) \\

    \midrule

    \multirow{2}{*}{DELETE~\citep{zhou2025decoupled}}
      & AUC-ROC & 0.38 & \textbf{0.48} & \textbf{0.30} & 0.47 & 0.41 \\
      & (TPR, FPR, $\Delta$) & (0.06, 0.16, -0.10) & (0.30, 0.17, 0.13) & (0.02, 0.15, \textbf{-0.13}) & (0.78, 0.78, 0.00) & (0.37, 0.53, -0.15) \\

    \midrule

    \multirow{2}{*}{Boundary Unlearning~\citep{chen2023boundary}}
      & AUC-ROC & \textbf{0.19} & 0.49 & 0.32 & \textbf{0.32} & 0.40 \\
      & (TPR, FPR, $\Delta$) & (0.04, 0.16, -0.12) & (0.00, 0.02, \textbf{-0.02}) & (0.03, 0.16, \textbf{-0.13}) & (0.06, 0.19, -0.13) & (0.40, 0.57, -0.17) \\

    \midrule

    \multirow{2}{*}{ERM-KTP~\citep{lin2023erm}}
      & AUC-ROC & 0.50 & 0.49 & 0.46 & 0.34 & 0.47 \\
      & (TPR, FPR, $\Delta$) & (0.06, 0.16, -0.10) & (0.04, 0.02, 0.02) & (0.06, 0.16, -0.10) & (0.01, 0.17, \textbf{-0.16}) & (0.51, 0.60, -0.09) \\

    \midrule

    \multirow{2}{*}{\ERASE{} (Proposed)}
      & AUC-ROC & 0.37 & 0.50 & 0.40 & 0.40 & 0.37 \\
      & (TPR, FPR, $\Delta$) & (0.06, 0.16, -0.10) & (0.07, 0.02, 0.05) & (0.05, 0.15, -0.10) & (0.26, 0.39, -0.13) & (0.33, 0.52, -0.19) \\
    \bottomrule
    \end{tabular}
    \end{adjustbox}
    \caption{Performance comparison of MIA vulnerability of unlearning methods. Lower AUC-ROC and $\Delta =$ TPR$-$FPR are better.}
    \label{tab:mia}
\end{table*}

\textbf{Infrastructure \& hyper-parameters.}
Experiments were conducted on Kaggle with $2\times$ NVIDIA Tesla T4 GPUs (15~GB each), AWS with 15~GB NVIDIA Tesla T4 GPUs, and Intel Xeon CPUs, with PyTorch 2.6.0+cu124. We adopted the hyperparameters of baseline methods from their respective publications, when available for certain datasets~\cite{chundawat2023can, zhou2025decoupled, chen2023boundary, lin2023erm}. Our MIA employed a two-layer MLP attacker with a 64-neuron hidden layer (ReLU activation), which takes the target model's softmax output as input. Each attacker was trained for 10 epochs using Adam with learning rate 0.001. Hyperparameters of all unlearning methods are described in the supplementary text. 

\textbf{Results.}
Our experimental results, presented in Table~\ref{tab:acc} and Table~\ref{tab:mia}, demonstrate that by decoupling the act of forgetting from model parameter modification and operating solely at inference time, \ERASE{} not only achieves effective forgetting but, critically, preserves the model's vast repository of learned knowledge. This approach not only eliminates the computationally prohibitive overhead of retraining but also addresses the critical need for real-time privacy compliance.

\textbullet{} \textbf{Preservation of global knowledge.}
The primary weakness of conventional unlearning methods is catastrophic forgetting, i.e., the unintended erasure of valuable, unrelated knowledge. Table~\ref{tab:acc} unequivocally shows \ERASE{}'s dominance in preventing this. On the large-scale Tiny ImageNet (ViT) dataset, \ERASE{} achieves a \textit{retain overall accuracy} of \textbf{82.76\%}, nearly matching the original model (83.47\%) and decisively outperforming all retraining-based competitors. Methods like DELETE (34.03\%), Boundary Unlearning (57.73\%), and GA+GD (67.45\%) suffer from extensive collateral damage. Similarly, on CIFAR-100/ViT, \ERASE{} sets a new SOTA with \textbf{88.73\%} \textit{retain overall accuracy}, proving its ability to surgically target information for forgetting. This performance is a direct result of our ``forgetting without unlearning'' approach--keeping the model frozen and conditionally perturbing the inputs.\\
\textbullet{} \textbf{Targeted forgetting.}
While preserving knowledge is paramount, effective forgetting is the primary goal. \ERASE{} demonstrates strong performance here as well, achieving a superior balance. Unlike methods such as DELETE and Boundary Unlearning, which achieve near-zero \textit{forget accuracy} at the expense of severe model degradation (e.g., DELETE's 0.01\% forget accuracy on Tiny ImageNet causes a 74\% drop in superclass retention), \ERASE{} achieves a competitive \textit{forget accuracy} of 14.64\% while keeping \textit{retain superclass accuracy} high at 68.80\%, comparable to KD (70.93\%). This highlights \ERASE{}'s ability to perform forgetting without disrupting closely related concepts. Furthermore, on real-world modalities ESZ and HAR, \ERASE{} achieves second-best \textit{forget accuracies} (\textbf{71.09\%} and \textbf{76\%} respectively), validating our input-space methodology's robustness beyond vision tasks.\\
\textbullet{} \textbf{A superior forgetting trade-off for deployed systems.}
Across all datasets, \ERASE{} consistently delivers the most practical and desirable trade-off between forgetting and retention. It avoids the false choice offered by other methods: either incomplete forgetting or a broken model.\\ 
\textbullet{} \textbf{Robustness to privacy attacks.}
Beyond accuracy, we assess \ERASE{}'s privacy guarantees through MIA, as summarized in Table~\ref{tab:mia}. \ERASE{} consistently demonstrates robust defense, achieving low AUC-ROC scores (e.g., 0.37 on CIFAR-100/ResNet18, 0.40 on Tiny ImageNet). Notably, its MIA resistance is on par with traditional weight-tuning methods, confirming that \ERASE{} obscures the influence of forgotten data via input-space manipulation alone.\\
\textbullet{} \textbf{Ablation studies.}
To understand the contribution of key components in \ERASE{}, we conduct ablation experiment on Tiny ImageNet. Figure~\ref{fig:image_comp} visualizes the impact of our targeted input perturbation on the \texttt{cockroach} subclass within the \textsc{insects} superclass. The figure shows representative test samples from all subclasses in \textsc{insects} ($5$ images per subclass), before and after perturbation, i.e., $x$ and $\text{Perturb}(x)$. We observe that \texttt{cockroach} images become noticeably blurrier after perturbation, often disrupting the model’s prediction. In contrast, sibling subclasses remain visually sharp and largely unaffected, showcasing \ERASE{} preserves perceptual quality and predictive fidelity for the retain set $\mathcal{R}_r$, validating its targeted forgetting. 

Figure~\ref{fig:image_comp_2} further elucidates the crucial role of the retain direction $\boldsymbol{\delta}_{\text{retain}}$. We compare predictions on \textsc{insects} superclass from Tiny ImageNet with and without $\boldsymbol{\delta}_{\text{retain}}$ (i.e., $\epsilon_r \neq 0$ vs. $\epsilon_r = 0$). When $\epsilon_r \neq 0$, \ERASE{} successfully degrades predictions for the forget class (top row), while preserving correct superclass predictions for retain subclasses (bottom rows). However, removing the retain term ($\epsilon_r = 0$) causes misclassifications for retain examples, e.g., \texttt{mantis} misclassified as \textsc{felines}, \texttt{dragonfly} as \textsc{reptiles \& arthropods}. It demonstrates $\boldsymbol{\delta}_{\text{retain}}$'s role in maintaining model utility on the retain set.

\vspace{-0.8em}
\section{Conclusion and Limitation}
\vspace{-0.5em}
The dominant approach to data removal in machine learning has long relied on retraining or fine-tuning to achieve explicit unlearning. In this work, we challenged that paradigm and demonstrated that effective erasure does not require modifying a model’s parameters. We introduced \ERASE{}, a framework for on-the-go forgetting that operates at inference time. By reframing data deletion as a problem of behavioral suppression rather than structural modification, \ERASE{} enables models to dynamically forget sensitive information while maintaining their original weights and predictive capability. This design delivers immediate, scalable, and regulation-compliant data removal without retraining overhead. Our experiments across diverse modalities show that \ERASE{} achieves a superior balance between forgetting efficacy and retention fidelity, providing a principled pathway toward continual, privacy-conscious AI systems. 

\ERASE{} is designed for discriminative models and does not directly extend to generative architectures or settings without a meaningful subclass–superclass organization. Its guarantees concern \emph{functional forgetting} rather than structural unlearning: the influence of the forget set may remain encoded in the frozen model parameters and could therefore be exposed under white-box, reconstruction-based, or other adaptive attacks. Moreover, because perturbation is triggered using the model’s initial superclass prediction, gating errors may prevent forgetting for misclassified target samples or unnecessarily perturb retained samples. The current privacy evaluation is limited to black-box membership inference through output probabilities, while stronger threat models and repeated or simultaneous forgetting requests remain to be studied.
\section*{Broader Impact Statement}

This paper presents work whose goal is to advance the field
of Machine Learning. There are many potential societal
consequences of our work, none which we feel must be
specifically highlighted here.

\clearpage
\bibliography{main}

@String(ICLR = {Int. Conf. Learn. Represent.})

@String(AAAI = {AAAI})

@String(ICLR  = {ICLR})

@inproceedings{chundawat2023can,
  title={Can bad teaching induce forgetting? {U}nlearning in deep networks using an incompetent teacher},
  author={Chundawat, Vikram S and Tarun, Ayush K and Mandal, Murari and Kankanhalli, Mohan},
  booktitle={Proceedings of the AAAI Conference on Artificial Intelligence},
  volume={37},
  number={6},
  pages={7210--7217},
  year={2023}
}

@article{krizhevsky2009learning,
  title={Learning multiple layers of features from tiny images},
  author={Krizhevsky, Alex and Hinton, Geoffrey and others},
  year={2009},
  publisher={Toronto, ON, Canada}
}

@misc{harunshimanto_epileptic_seizure_recognition,
  author       = {Harun-Ur-Rashid Shimanto},
  title        = {Epileptic Seizure Recognition},
  year         = {2018},
  note         = {Kaggle, Accessed: 2025-05-07},
  howpublished          = {\url{https://www.kaggle.com/datasets/harunshimanto/epileptic-seizure-recognition}}
}

@inproceedings{anguita2013public,
  title={A public domain dataset for human activity recognition using smartphones.},
  author={Anguita, Davide and Ghio, Alessandro and Oneto, Luca and Parra, Xavier and Reyes-Ortiz, Jorge Luis and others},
  booktitle={Esann},
  volume={3},
  number={1},
  pages={3--4},
  year={2013}
}

@misc{tiny-imagenet,
    author = {mnmoustafa and Mohammed Ali},
    title = {Tiny ImageNet},
    year = {2017},
    howpublished = {\url{https://kaggle.com/competitions/tiny-imagenet}},
    note = {Kaggle}
}

@inproceedings{goodfellow2015explaining,
  title={Explaining and Harnessing Adversarial Examples},
  author={Goodfellow, Ian J and Shlens, Jonathon and Szegedy, Christian},
  booktitle={International Conference on Learning Representations (ICLR)},
  year={2015},
  url={https://arxiv.org/abs/1412.6572}
}

@inproceedings{he2016deep,
  title={Deep residual learning for image recognition},
  author={He, Kaiming and Zhang, Xiangyu and Ren, Shaoqing and Sun, Jian},
  booktitle={Proceedings of the IEEE conference on computer vision and pattern recognition},
  pages={770--778},
  year={2016}
}

@inproceedings{dosovitskiy2021image,
  title={An Image is Worth 16x16 Words: Transformers for Image Recognition at Scale},
  author={Dosovitskiy, Alexey and Beyer, Lucas and Kolesnikov, Alexander and Weissenborn, Dirk and Zhai, Xiaohua and Unterthiner, Thomas and Dehghani, Mostafa and Minderer, Matthias and Heigold, Georg and Gelly, Sylvain and Uszkoreit, Jakob and Houlsby, Neil},
  booktitle={International Conference on Learning Representations (ICLR)},
  year={2021},
  url={https://arxiv.org/abs/2010.11929}
}

@article{hochreiter1997long,
  title={Long short-term memory},
  author={Hochreiter, Sepp and Schmidhuber, J{\"u}rgen},
  journal={Neural computation},
  volume={9},
  number={8},
  pages={1735--1780},
  year={1997},
  publisher={MIT press}
}

@article{ginart2019making,
  title={Making {AI} forget you: Data deletion in machine learning},
  author={Ginart, Antonio and Guan, Melody and Valiant, Gregory and Zou, James Y},
  journal={Advances in neural information processing systems},
  volume={32},
  year={2019}
}

@article{mahadevan2022certifiable,
  title={Certifiable unlearning pipelines for logistic regression: An experimental study},
  author={Mahadevan, Ananth and Mathioudakis, Michael},
  journal={Machine Learning and Knowledge Extraction},
  volume={4},
  number={3},
  pages={591--620},
  year={2022},
  publisher={MDPI}
}

@inproceedings{guo2020certified,
  title={Certified data removal from machine learning models},
  author={Guo, Chuan and Goldstein, Tom and Hannun, Awni and Van Der Maaten, Laurens},
  booktitle={Proceedings of the 37th International Conference on Machine Learning},
  pages={3832--3842},
  year={2020}
}

@inproceedings{bourtoule2021machine,
  title={Machine unlearning},
  author={Bourtoule, Lucas and Chandrasekaran, Varun and Choquette-Choo, Christopher A and Jia, Hengrui and Travers, Adelin and Zhang, Baiwu and Lie, David and Papernot, Nicolas},
  booktitle={2021 IEEE symposium on security and privacy (SP)},
  pages={141--159},
  year={2021},
  organization={IEEE}
}

@inproceedings{koh2017understanding,
  title={Understanding black-box predictions via influence functions},
  author={Koh, Pang Wei and Liang, Percy},
  booktitle={International conference on machine learning},
  pages={1885--1894},
  year={2017},
  organization={PMLR}
}

@inproceedings{shokri2017membership,
  title={Membership inference attacks against machine learning models},
  author={Shokri, Reza and Stronati, Marco and Song, Congzheng and Shmatikov, Vitaly},
  booktitle={2017 IEEE symposium on security and privacy (SP)},
  pages={3--18},
  year={2017},
  organization={IEEE}
}

@inproceedings{graves2021amnesiac,
  title={Amnesiac machine learning},
  author={Graves, Laura and Nagisetty, Vineel and Ganesh, Vijay},
  booktitle={Proceedings of the AAAI Conference on Artificial Intelligence},
  volume={35},
  number={13},
  pages={11516--11524},
  year={2021}
}

@inproceedings{golatkar2020eternal,
  title={Eternal sunshine of the spotless net: Selective forgetting in deep networks},
  author={Golatkar, Aditya and Achille, Alessandro and Soatto, Stefano},
  booktitle={Proceedings of the IEEE/CVF conference on computer vision and pattern recognition},
  pages={9304--9312},
  year={2020}
}

@article{baumhauer2022machine,
  title={Machine unlearning: Linear filtration for logit-based classifiers},
  author={Baumhauer, Thomas and Sch{\"o}ttle, Pascal and Zeppelzauer, Matthias},
  journal={Machine Learning},
  volume={111},
  number={9},
  pages={3203--3226},
  year={2022},
  publisher={Springer}
}

@article{nguyen2022survey,
  title={A survey of machine unlearning},
  author={Nguyen, Thanh Tam and Huynh, Thanh Trung and Ren, Zhao and Nguyen, Phi Le and Liew, Alan Wee-Chung and Yin, Hongzhi and Nguyen, Quoc Viet Hung},
  journal={arXiv preprint arXiv:2209.02299},
  year={2022}
}

@ARTICLE{li2025machine,
  author={Li, Na and Zhou, Chunyi and Gao, Yansong and Chen, Hui and Zhang, Zhi and Kuang, Boyu and Fu, Anmin},
  journal={IEEE Transactions on Neural Networks and Learning Systems}, 
  title={Machine Unlearning: Taxonomy, Metrics, Applications, Challenges, and Prospects}, 
  year={2025},
  volume={36},
  number={8},
  pages={13709-13729},
  doi={10.1109/TNNLS.2025.3530988}}

@inproceedings{carlini2017towards,
  title={Towards evaluating the robustness of neural networks},
  author={Carlini, Nicholas and Wagner, David},
  booktitle={2017 ieee symposium on security and privacy (sp)},
  pages={39--57},
  year={2017},
  organization={Ieee}
}

@inproceedings{moosavi2016deepfool,
  title={Deepfool: a simple and accurate method to fool deep neural networks},
  author={Moosavi-Dezfooli, Seyed-Mohsen and Fawzi, Alhussein and Frossard, Pascal},
  booktitle={Proceedings of the IEEE conference on computer vision and pattern recognition},
  pages={2574--2582},
  year={2016}
}

@inproceedings{thudi2022unrolling,
  title={Unrolling {SGD}: Understanding factors influencing machine unlearning},
  author={Thudi, Anvith and Deza, Gabriel and Chandrasekaran, Varun and Papernot, Nicolas},
  booktitle={2022 IEEE 7th European Symposium on Security and Privacy (EuroS\&P)},
  pages={303--319},
  year={2022},
  organization={IEEE}
}

@inproceedings{zhou2025decoupled,
  title={Decoupled distillation to erase: A general unlearning method for any class-centric tasks},
  author={Zhou, Yu and Zheng, Dian and Mo, Qijie and Lu, Renjie and Lin, Kun-Yu and Zheng, Wei-Shi},
  booktitle={Proceedings of the Computer Vision and Pattern Recognition Conference},
  pages={20350--20359},
  year={2025}
}

@inproceedings{chen2023boundary,
  title={Boundary unlearning: Rapid forgetting of deep networks via shifting the decision boundary},
  author={Chen, Min and Gao, Weizhuo and Liu, Gaoyang and Peng, Kai and Wang, Chen},
  booktitle={Proceedings of the IEEE/CVF Conference on Computer Vision and Pattern Recognition},
  pages={7766--7775},
  year={2023}
}

@inproceedings{lin2023erm,
  title={{ERM-KTP}: Knowledge-level machine unlearning via knowledge transfer},
  author={Lin, Shen and Zhang, Xiaoyu and Chen, Chenyang and Chen, Xiaofeng and Susilo, Willy},
  booktitle={Proceedings of the IEEE/CVF Conference on Computer Vision and Pattern Recognition},
  pages={20147--20155},
  year={2023}
}

@article{sidiropoulos2025evaluating,
  title={Evaluating the Defense Potential of Machine Unlearning against Membership Inference Attacks},
  author={Sidiropoulos, Aristeidis and Nikolaidis, Christos Chrysanthos and Tsiolakis, Theodoros and Pavlidis, Nikolaos and Perifanis, Vasilis and Efraimidis, Pavlos S},
  journal={arXiv preprint arXiv:2508.16150},
  year={2025}
}

@inproceedings{
peng2025adversarial,
title={Adversarial Mixup Unlearning},
author={Zhuoyi PENG and Yixuan Tang and Yi Yang},
booktitle={The Thirteenth International Conference on Learning Representations},
year={2025},
url={https://openreview.net/forum?id=GcbhbZsgiu}
}

@inproceedings{zhou2025stimulating,
  title={Stimulating Catastrophic Forgetting in Class-Wise Unlearning via UAP},
  author={Zhou, Wenxing and Cheng, Xinwen and Wu, Yingwen and Yang, Ruikai and Huang, Xiaolin},
  booktitle={Joint European Conference on Machine Learning and Knowledge Discovery in Databases},
  pages={432--448},
  year={2025},
  organization={Springer}
}

@article{sun2024forget,
  title={Forget vectors at play: Universal input perturbations driving machine unlearning in image classification},
  author={Sun, Changchang and Wang, Ren and Zhang, Yihua and Jia, Jinghan and Liu, Jiancheng and Liu, Gaowen and Yan, Yan and Liu, Sijia},
  journal={arXiv preprint arXiv:2412.16780},
  year={2024}
}

@inproceedings{
yuan2025closer,
title={A Closer Look at Machine Unlearning for Large Language Models},
author={Xiaojian Yuan and Tianyu Pang and Chao Du and Kejiang Chen and Weiming Zhang and Min Lin},
booktitle={The Thirteenth International Conference on Learning Representations},
year={2025},
url={https://openreview.net/forum?id=Q1MHvGmhyT}
}

@inproceedings{
wang2025llm,
title={{LLM} Unlearning via Loss Adjustment with Only Forget Data},
author={Yaxuan Wang and Jiaheng Wei and Chris Yuhao Liu and Jinlong Pang and Quan Liu and Ankit Shah and Yujia Bao and Yang Liu and Wei Wei},
booktitle={The Thirteenth International Conference on Learning Representations},
year={2025},
url={https://openreview.net/forum?id=6ESRicalFE}
}

@InProceedings{li2024wmdp,
  title = 	 {The {WMDP} Benchmark: Measuring and Reducing Malicious Use with Unlearning},
  author =       {Li, Nathaniel and Pan, Alexander and Gopal, Anjali and Yue, Summer and Berrios, Daniel and Gatti, Alice and Li, Justin D. and Dombrowski, Ann-Kathrin and Goel, Shashwat and Mukobi, Gabriel and Helm-Burger, Nathan and Lababidi, Rassin and Justen, Lennart and Liu, Andrew Bo and Chen, Michael and Barrass, Isabelle and Zhang, Oliver and Zhu, Xiaoyuan and Tamirisa, Rishub and Bharathi, Bhrugu and Herbert-Voss, Ariel and Breuer, Cort B and Zou, Andy and Mazeika, Mantas and Wang, Zifan and Oswal, Palash and Lin, Weiran and Hunt, Adam Alfred and Tienken-Harder, Justin and Shih, Kevin Y. and Talley, Kemper and Guan, John and Steneker, Ian and Campbell, David and Jokubaitis, Brad and Basart, Steven and Fitz, Stephen and Kumaraguru, Ponnurangam and Karmakar, Kallol Krishna and Tupakula, Uday and Varadharajan, Vijay and Shoshitaishvili, Yan and Ba, Jimmy and Esvelt, Kevin M. and Wang, Alexandr and Hendrycks, Dan},
  booktitle = 	 {Proceedings of the 41st International Conference on Machine Learning},
  pages = 	 {28525--28550},
  year = 	 {2024},
  volume = 	 {235},
  series = 	 {Proceedings of Machine Learning Research},
  month = 	 {21--27 Jul},
  publisher =    {PMLR},
  url = 	 {https://proceedings.mlr.press/v235/li24bc.html}
}
\bibliographystyle{tmlr}

\clearpage
\appendix
\section{Hyper-parameters details}
\label{sec:exp_details}

Batch sizes were 64 (CIFAR, ESZ), 16 (Tiny ImageNet), and 128 (HAR). For the baseline methods, we adopted the hyperparameters from their respective publications when available~\citep{chundawat2023can, zhou2025decoupled, chen2023boundary, lin2023erm}.

Specifically, KD used Adam with learning rate 0.0001 (CIFAR, Tiny ImageNet), 0.01 (ESZ), and 0.001 (HAR); unlearning epoch was 1. 

GA+GD used the same hyperparameters as KD. 

DELETE used SGD with learning rate 0.001 (ResNet), 0.0003 (ViT), 0.03 (HAR) and momentum 0.9; Adam with learning rate 0.01 (ESZ); unlearning epochs were 2 (ResNet), 1 (ViT, ESZ), and 5 (HAR). 

Boundary unlearning used adversarial noise bound 0.1; poison epochs 10; SGD with learning rate 0.00001 and momentum 0.9; curvature regularization coefficient 0.7; adjustment bias and slope in reweighting scheme as -0.5 and 5; step size for approximating the curvature 0.9.

ERM-KTP used distillation loss coefficient 0.1; forget set loss coefficient 1; temperature 2; unlearning epochs 1 (CIFAR, Tiny ImageNet), 5 (HAR), and 10 (ESZ); SGD with learning rate 0.01 (CIFAR) and 0.001 (Tiny ImageNet, ESZ, HAR), momentum 0.9, and weight decay 0.0005.

In \ERASE{}, $\eta$ was set to 0.01 (CIFAR, ESZ) and 0.001 (Tiny ImageNet, HAR), selected from ${0.1, 0.01, 0.001}$; weights $(\omega_u, \omega_r) = (1.5, 0.8)$, chosen from ${[1,2] \times [0.5,1]}$, account for the smaller forget set. 

Our MIA employed a two-layer MLP attacker with a 64-neuron hidden layer (ReLU activation), which takes the target model's softmax output as input. To ensure a robust evaluation, a distinct attacker, each trained for 10 epochs on a balanced dataset of `member' and `non-member' samples, was specialized for each subclass. 

\section{Sensitivity to initialization and learning rate of $(\epsilon_f, \epsilon_r)$}

We present sensitivity results on CIFAR-100/ResNet18 for two key optimization choices: initialization of $(\epsilon_f,\epsilon_r)$ and learning rate $\eta$, while keeping all other parameters fixed at their values mentioned above. The results in Tables~\ref{itab:init_sensitivity}-\ref{tab:lr_sensitivity} show that ERASE is relatively stable across these choices. Specifically, retain-overall accuracy remains around $81.5$-$81.7$, while forget and retain-superclass accuracies vary moderately, reflecting the expected forgetting-retention trade-off. These additional results support that the $\epsilon$-optimization is not highly sensitive to reasonable choices of initialization or learning rate.

\begin{table*}[!h]
    \centering
    \scriptsize
    \setlength{\tabcolsep}{3pt}
    \begin{adjustbox}{max width=\textwidth}
    \begin{tabular}{llccccc}
    \toprule
    $\mathbf{(\epsilon_{f0}, \epsilon_{r0})}$ & \textbf{Forget Acc.} & \textbf{Retain Superclass Acc.} & \textbf{Retain Overall Acc.} \\
    \midrule
    $(0.5, 0.5)$ & $28.53 \pm 17.82$ & $76.01 \pm 14.30$ & $81.73 \pm 0.44$ \\
    \midrule
    $(0.2, 1.1)$ & $25.51 \pm 16.65$ & $74.03 \pm 13.89$ & $81.53 \pm 0.38$ \\
    \midrule
    $(0.3, 0.7)$ & $28.27 \pm 17.64$ & $75.82 \pm 12.62$ & $81.64 \pm 0.30$ \\
    \midrule
    $(0.3, 0.9)$ & $25.05 \pm 16.72$ & $74.96 \pm 13.35$ & $81.57 \pm 0.34$ \\
    \midrule
    $(0.3, 1.0)$ & $25.55 \pm 16.46$ & $74.02 \pm 13.82$ & $81.53 \pm 0.37$ \\
    \midrule
    $(0.4, 0.6)$ & $26.69 \pm 17.08$ & $75.46 \pm 13.15$ & $81.59 \pm 0.32$ \\
    \midrule
    $(0.5, 0.6)$ & $26.54 \pm 15.54$ & $75.30 \pm 13.82$ & $81.50 \pm 0.36$ \\
    \bottomrule
    \end{tabular}
    \end{adjustbox}
    \caption{Sensitivity to initialization of $(\epsilon_f,\epsilon_r)$ on CIFAR-100/ResNet18.}
    \label{itab:init_sensitivity}
\end{table*}

\begin{table*}[!h]
    \centering
    \scriptsize
    \setlength{\tabcolsep}{3pt}
    \begin{adjustbox}{max width=\textwidth}
    \begin{tabular}{llccccc}
    \toprule
    $\mathbf{\eta}$ & \textbf{Forget Acc.} & \textbf{Retain Superclass Acc.} & \textbf{Retain Overall Acc.} \\
    \midrule
    $0.1$ & $27.82 \pm 18.55$ & $74.90 \pm 15.01$ & $81.55 \pm 0.50$ \\
    \midrule
    $0.01$ & $28.53 \pm 17.82$ & $76.01 \pm 14.30$ & $81.73 \pm 0.44$ \\
    \midrule
    $0.001$ & $28.69 \pm 18.64$ & $75.13 \pm 15.75$ & $81.57 \pm 0.48$ \\
    \bottomrule
    \end{tabular}
    \end{adjustbox}
    \caption{Sensitivity to learning rate $\eta$ on CIFAR-100/ResNet18.}
    \label{tab:lr_sensitivity}
\end{table*}

\section{Theoretical analysis and proofs}
\label{sec:appendix_theory}

In this section, we provide the complete theoretical analysis and rigorous proofs for the guarantees presented in the main paper. For completeness and to ensure this section is self-contained, we restate the necessary notations and formalize the assumptions introduced in Section~\ref{sec:erase} below, before presenting the detailed proof of Theorem~\ref{thm:main}.

\subsection{Preliminaries and notation}

We consider a fixed, pre-trained classifier $f_\theta: \mathbb{R}^d \to \mathbb{R}^{|\mathcal{S}|}$, where $\mathcal{S}$ is the set of superclasses. For an input $x \in \mathbb{R}^d$, the output logits are denoted $f_\theta(x) = (f_\theta(x)_S)_{S \in \mathcal{S}}$. The predicted superclass is $\hat{S}(x) := \arg\max_{S \in \mathcal{S}} f_\theta(x)_S$.

We focus on a specific target superclass $S_f$. Within $S_f$, we designate a \emph{forgotten subclass} $C_f$ and a set of \emph{retained subclasses} $R_r = S_f \setminus \{C_f\}$.
Let $P_{C_f}$ and $P_{R_r}$ denote the data distributions for the forgotten and retained subclasses, respectively.
ERASE applies perturbations strictly conditionally. We define the gate event $\mathcal{E}(x)$ as the event where the model predicts the target superclass:
\begin{equation}
    \mathcal{E}(x) := \{ \hat{S}(x) = S_f \}.
\end{equation}
If $\mathcal{E}(x)$ does not hold, the input is unmodified, and the original prediction is retained. Our analysis focuses on the non-trivial case where $\mathcal{E}(x)$ holds.

Let $\mathcal{L}(M_\theta(x), S_f)$ denote the cross-entropy loss with respect to the superclass $S_f$. We define the gradient of the loss with respect to the input as:
\begin{equation}
    g(x) := \nabla_x \mathcal{L}(M_\theta(x), S_f).
\end{equation}
We also define the \emph{forget superclass margin} $m_f(x)$, which quantifies the confidence of the model in the target superclass $S_f$ relative to the next best superclass:
\begin{equation}
    m_f(x) := f_\theta(x)_{S_f} - \max_{S \neq S_f} f_\theta(x)_S.
\end{equation}
A negative margin $m_f(x) < 0$ implies $\hat{S}(x) \neq S_f$. Therefore, the condition for successful forgetting for an input $x \in C_f$ (where initially $\hat{S}(x)=S_f$) is to induce a perturbation $\Delta$ such that $m_f(x+\Delta) < 0$.

\subsection{Assumptions}

We adopt the following assumptions (renumbered from Assumptions~\ref{ass:smoothness}-\ref{ass:alignment}).

\begin{assumption}[Directional Smoothness]
\label{ass:smoothness_app}
The loss function $\mathcal{L}$ is smooth with respect to the input. Specifically, there exists a curvature constant $L > 0$ such that for any input $x$ and perturbation $\Delta = \boldsymbol{p}\|x\|_F$:
\begin{equation}
    \left| \mathcal{L}(M_\theta(x + \Delta), S_f) - \mathcal{L}(M_\theta(x), S_f) - \langle g(x), \Delta \rangle \right| \leq \tfrac{L}{2} \|\Delta\|_2^2.
\end{equation}
\end{assumption}

\begin{assumption}[Bounded Projections]
\label{ass:bounded_app}
For all inputs $x$ satisfying the gate event $\mathcal{E}(x)$, the projection of the gradients onto the pre-computed directions is bounded:
\begin{equation}
    |\langle g(x), \boldsymbol{\delta}_{\mathrm{forget}} \rangle| \leq B_f, \quad \text{and} \quad |\langle g(x), \boldsymbol{\delta}_{\mathrm{retain}} \rangle| \leq B_r,
\end{equation}
for finite constants $B_f, B_r > 0$.
\end{assumption}

\begin{assumption}[Empirical Alignment]
\label{ass:alignment_app}
The offline Phase-1 datasets $\mathcal{D}_{C_f} = \{x_i^{(f)}\}_{i=1}^n \sim P_{C_f}$ and $\mathcal{D}_{R_r} = \{x_j^{(r)}\}_{j=1}^m \sim P_{R_r}$ exhibit strictly positive empirical alignment with their respective computed directions:
\begin{equation}
    \hat{\alpha}_{f,n} := \frac{1}{n}\sum_{i=1}^n \langle g(x_i^{(f)}), \boldsymbol{\delta}_{\mathrm{forget}} \rangle \ge \mu_f > 0, \quad
    \hat{\alpha}_{r,m} := \frac{1}{m}\sum_{j=1}^m \langle g(x_j^{(r)}), \boldsymbol{\delta}_{\mathrm{retain}} \rangle \ge \mu_r > 0.
\end{equation}
\end{assumption}

\subsection{Proof of Theorem 1}

Recall the following notation from Section~\ref{sec:theory}. We let $P_{C_f}$ and $P_{R_r}$ denote the distributions of the forget and retain subclasses. We fixed $\delta \in (0,1)$ and set $\delta_f=\delta_r=\delta/2$. We defined the gate event probabilities $p_{\mathcal{E},f} := \Pr(\mathcal{E}(X) \mid X \sim P_{C_f})$ and $p_{\mathcal{E},r} := \Pr(\mathcal{E}(X) \mid X \sim P_{R_r})$. To characterize the statistical error, let $\varepsilon_f(n) := B_f \sqrt{\frac{\log(2/\delta_f)}{2n}}$ and $\varepsilon_r(m) := B_r \sqrt{\frac{\log(2/\delta_r)}{2m}}$.
We defined the population alignment lower bounds $\alpha_f$ and $\alpha_r$ as follows:
\begin{small}
\begin{align*}
\alpha_f 
&:= 
\frac{
\hat{\alpha}_{f,n}
- B_f(1-p_{\mathcal{E},f})
- 
\varepsilon_f(n)
}{
p_{\mathcal{E},f}
}, \\ 
\alpha_r 
&:=
\frac{
\hat{\alpha}_{r,m}
- B_r(1-p_{\mathcal{E},r})
-
\varepsilon_r(m)
}{
p_{\mathcal{E},r}
}.
\end{align*}
\end{small}
We defined the alignment events:
\begin{align*}
    \mathcal{G}_f & := \{ \mathbb{E}[\langle g(X),\boldsymbol{\delta}_{\mathrm{forget}}\rangle \mid X\sim P_{C_f}, \mathcal{E}(X)] \ge \alpha_f \}, \\
    \mathcal{G}_r & := \{ \mathbb{E}[\langle g(X),\boldsymbol{\delta}_{\mathrm{retain}}\rangle \mid X\sim P_{R_r}, \mathcal{E}(X)] \ge \alpha_r \}.
\end{align*}

\begin{theorem}
\label{thm:main_restate}
Assume Assumptions~\ref{ass:smoothness_app}-\ref{ass:alignment_app} hold and the computed Phase-1 gradients satisfy:
\[
\hat{\alpha}_{f,n} > B_f (1 - p_{\mathcal{E},f}) + \varepsilon_f(n) \quad \text{and} \quad \hat{\alpha}_{r,m} > B_r (1 - p_{\mathcal{E},r}) + \varepsilon_r(m).
\]
Let $K:=|\mathcal{S}|$ denote the number of superclasses.
Then, with probability at least $1-\delta$ over the Phase-1 datasets, the events $\mathcal{G}_f$ and $\mathcal{G}_r$ hold with strictly positive bounds $\alpha_f, \alpha_r > 0$.
Conditioned on this event, the following guarantees apply simultaneously:
\begin{enumerate}
    \item[(i)] \textbf{Forgetting:} For any $X \sim P_{C_f}$ satisfying the gate event $\mathcal{E}(X)$ and the alignment condition $\langle g(X), \boldsymbol{\delta}_{\mathrm{forget}}\rangle \geq \tau_f$ with $0 < \tau_f < \alpha_f$, the prediction changes ($\arg\max M_\theta(\text{Perturb}(X)) \neq S_f$) whenever
    \begin{equation}
        \mathcal{L}(M_\theta(X), S_f)
        +
        \|X\|_F (\epsilon_f \tau_f - \epsilon_r B_r)
        -
        \frac{L}{2} \|X\|_F^2 \|\boldsymbol{p}_{\mathrm{final}}\|_2^2
        >
        \log K .
        \label{eq:eps_f_cond}
    \end{equation}

    \item[(ii)] \textbf{Retention:} For any $X \sim P_{R_r}$ satisfying the gate event $\mathcal{E}(X)$ and the alignment condition $\langle g(X), \boldsymbol{\delta}_{\mathrm{retain}}\rangle \geq \tau_r$ with $0 < \tau_r < \alpha_r$, the prediction is preserved ($\arg\max M_\theta(\text{Perturb}(X)) = S_f$) whenever
    \begin{equation}
        \mathcal{L}(M_\theta(X), S_f)
        -
        \|X\|_F (\epsilon_r \tau_r - \epsilon_f B_f)
        +
        \frac{L}{2} \|X\|_F^2 \|\boldsymbol{p}_{\mathrm{final}}\|_2^2
        <
        \log 2 .
        \label{eq:eps_r_cond}
    \end{equation}
\end{enumerate}
Moreover, the probability that a random test sample satisfies the required alignment condition is lower bounded by:
\begin{align}
    \Pr(\langle g(X), \boldsymbol{\delta}_{\mathrm{forget}}\rangle \geq \tau_f \mid X \sim P_{C_f}, \mathcal{E}(X)) &\geq \frac{\alpha_f - \tau_f}{B_f - \tau_f}, \\
    \Pr(\langle g(X), \boldsymbol{\delta}_{\mathrm{retain}}\rangle \geq \tau_r \mid X \sim P_{R_r}, \mathcal{E}(X)) &\geq \frac{\alpha_r - \tau_r}{B_r - \tau_r}.
\end{align}
\end{theorem}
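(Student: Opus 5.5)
The proof splits into three parts: a concentration step that yields $\mathcal{G}_f,\mathcal{G}_r$ with high probability, a deterministic Taylor-type step for (i) and (ii), and a reverse-Markov step for the final probability bounds.

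\emph{Step 1: the events $\mathcal{G}_f,\mathcal{G}_r$.} Set $Z=\langle g(X),\boldsymbol{\delta}_{\mathrm{forget}}\rangle$ for $X\sim P_{C_f}$, and take $|Z|\le B_f$ as in Assumption~\ref{ass:bounded_app}. Treating $\boldsymbol{\delta}_{\mathrm{forget}}$ as fixed, Hoeffding's inequality gives, with probability at least $1-\delta_f$,
\[
\mathbb{E}[Z]\ge\hat{\alpha}_{f,n}-\varepsilon_f(n).
\]
Strictly, Hoeffding with range $2B_f$ gives a slightly larger constant; the one-sided or absorbed-constant version matches the stated $\varepsilon_f(n)$, and I will note this. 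Next decompose
\[
\mathbb{E}[Z]=p_{\mathcal{E},f}\,\mathbb{E}[Z\mid\mathcal{E}]+(1-p_{\mathcal{E},f})\,\mathbb{E}[Z\mid\mathcal{E}^c],
\]
and bound the second conditional mean by $B_f$. Rearranging gives $\mathbb{E}[Z\mid\mathcal{E}]\ge\alpha_f$. The hypothesis on $\hat{\alpha}_{f,n}$ is exactly $\alpha_f>0$. The retain side is identical, and a union bound with $\delta_f=\delta_r=\delta/2$ gives probability at least $1-\delta$.

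The main obstacle is here. Assumption~\ref{ass:bounded_app} bounds projections only on the gate event, so the off-gate bound $|Z|\le B_f$ must be imposed as well; I would state that Assumption~\ref{ass:bounded_app} is being used globally. The dependence of $\boldsymbol{\delta}_{\mathrm{forget}}$ on the same sample also needs handling, either by conditioning on the direction or by a sample split.

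\emph{Step 2: parts (i) and (ii).} Let $\Delta=\boldsymbol{p}_{\mathrm{final}}\|X\|_F$. Then
\[
\langle g,\Delta\rangle=\|X\|_F\big(\epsilon_f\langle g,\boldsymbol{\delta}_{\mathrm{forget}}\rangle-\epsilon_r\langle g,\boldsymbol{\delta}_{\mathrm{retain}}\rangle\big).
\]
\begin{itemize}
\item For forgetting, use $\langle g,\boldsymbol{\delta}_{\mathrm{forget}}\rangle\ge\tau_f$ and $\langle g,\boldsymbol{\delta}_{\mathrm{retain}}\rangle\le B_r$. The lower half of Assumption~\ref{ass:smoothness_app} then gives that $\mathcal{L}(M_\theta(X+\Delta),S_f)$ is at least the left side of~\eqref{eq:eps_f_cond}, which exceeds $\log K$. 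If $S_f$ were the argmax, its softmax probability would be at least $1/K$, so the cross-entropy would be at most $\log K$. This is a contradiction, so the prediction changes.
\item For retention, use $\langle g,\boldsymbol{\delta}_{\mathrm{retain}}\rangle\ge\tau_r$ and $\langle g,\boldsymbol{\delta}_{\mathrm{forget}}\rangle\le B_f$. The upper half of the smoothness bound gives that the post-perturbation loss is less than $\log 2$. Hence the softmax probability of $S_f$ exceeds $1/2$, which makes $S_f$ the unique argmax.
\end{itemize}

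\emph{Step 3: the alignment probabilities.} On $\mathcal{G}_f$, let $q=\Pr(Z\ge\tau_f\mid\mathcal{E})$. Bound $Z\le B_f$ on $\{Z\ge\tau_f\}$ and $Z<\tau_f$ on its complement, so
\[
\alpha_f\le\mathbb{E}[Z\mid\mathcal{E}]\le qB_f+(1-q)\tau_f.
\]
This gives $q\ge(\alpha_f-\tau_f)/(B_f-\tau_f)$, which is meaningful since $\tau_f<\alpha_f\le B_f$. The retain side is symmetric.
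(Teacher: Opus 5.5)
Your argument is the paper's proof, step for step:
\begin{itemize}
\item Step~1 is the same Hoeffding, total-expectation and union-bound derivation of $\mathcal{G}_f,\mathcal{G}_r$.
\item Step~2 uses the same two-sided smoothness bounds combined with the $\log K$ and $\log 2$ cross-entropy thresholds for (i) and (ii).
\item Your inequality $\alpha_f\le qB_f+(1-q)\tau_f$ is exactly the paper's Markov bound applied to $W=B_f-Z$.
\end{itemize}
The three caveats you raise about Step~1 are genuine, and the paper's proof contains all three without comment:
\begin{itemize}
\item It asserts $Z_f\in[-B_f,B_f]$ almost surely ``by Assumption~\ref{ass:bounded_app}'' and bounds $\mathbb{E}[Z_f\mid\mathcal{E}(X)^c]\le B_f$, although that assumption only covers gated inputs.
\item It claims Hoeffding yields the radius $\varepsilon_f(n)$ for a variable in $[-B_f,B_f]$.
\item It treats the $\langle g(x_i),\boldsymbol{\delta}_{\mathrm{forget}}\rangle$ as i.i.d.
\end{itemize}
Imposing Assumption~\ref{ass:bounded_app} globally, as you propose, is an honest strengthening of the hypotheses.

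Two of your proposed repairs do not work, however.

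\textbf{The Hoeffding radius.} For a variable in $[-B_f,B_f]$:
\begin{itemize}
\item two-sided Hoeffding gives radius exactly $2\varepsilon_f(n)$, not a ``slightly larger'' one;
\item the one-sided version gives $B_f\sqrt{2\log(1/\delta_f)/n}$, which exceeds $\varepsilon_f(n)$ by a factor of at least $\sqrt{2}$, because $\delta_f=\delta/2<1/2$.
\end{itemize}
The stated $\varepsilon_f(n)$ is the Hoeffding radius for a range-$B_f$ variable, and it genuinely fails here. Take $Z=\pm B_f$ with equal probability, let $n\to\infty$, and let $\Phi$ be the standard normal CDF. Then $\Pr(\hat{\alpha}_{f,n}-\mathbb{E}Z>\varepsilon_f(n))\to 1-\Phi\big(\sqrt{\log(2/\delta_f)/2}\big)$, which is about $0.087$ at $\delta_f=0.05$. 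A factor cannot be ``absorbed'' into an explicitly specified constant. The statement itself must change: $\varepsilon_f(n)$ has to be doubled, and with it $\alpha_f$ and the hypothesis on $\hat{\alpha}_{f,n}$.

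\textbf{The dependence.} Conditioning on $\boldsymbol{\delta}_{\mathrm{forget}}$ cannot restore independence. The direction is a deterministic function of the same $x_i$, so given it the samples are no longer i.i.d.\ from $P_{C_f}$. Moreover, $\hat{\alpha}_{f,n}$ is an optimistically biased in-sample estimate, since the direction was fitted to those very gradients. Step~1 becomes valid only in one of two ways:
\begin{itemize}
\item compute $\hat{\alpha}_{f,n}$ on data independent of those used to build the direction (your sample split), which departs from Algorithm~\ref{alg:erase};
\item use a concentration bound uniform over directions, which needs more than Assumption~\ref{ass:bounded_app}.
\end{itemize}
Either way the theorem as written changes.

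Your proof is therefore exactly as rigorous as the paper's. You correctly identify the Step~1 gaps, but the patches you suggest do not close them.
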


\begin{proof}
The proof proceeds in three stages: establishing the concentration of the pre-computed gradients to define strictly positive alignment bounds $\alpha_f, \alpha_r$ (Phase 1), deriving sufficient cross-entropy conditions for prediction change or preservation under perturbation (Phase 2), and lower bounding the probability of these conditions holding at test time.

\paragraph{Step 1:}
We first establish the existence of strictly positive lower bounds $\alpha_f$ and $\alpha_r$ for the conditional expected alignment. We focus on the forget set; the derivation for the retain set is identical.

Let $X \sim P_{C_f}$ and define the random variable $Z_{f} := \langle g(X), \boldsymbol{\delta}_{\mathrm{forget}} \rangle$. By Assumption \ref{ass:bounded_app}, $Z_f \in [-B_f, B_f]$ almost surely. The Phase-1 dataset $\mathcal{D}_{C_f}$ consists of $n$ i.i.d. samples. Let $\hat{\alpha}_{f,n}$ be the empirical mean of $Z_f$ on $\mathcal{D}_{C_f}$.
By Hoeffding's inequality, with probability at least $1-\delta_f$:
\begin{equation}
    \mathbb{E}[Z_f] \ge \hat{\alpha}_{f,n} - B_f \sqrt{\frac{\log(2/\delta_f)}{2n}} = \hat{\alpha}_{f,n} - \varepsilon_f(n).
\end{equation}
The term $\mathbb{E}[Z_f]$ is the total expectation over $P_{C_f}$. We require the conditional expectation given the gate event $\mathcal{E}(X)$. Using the Law of Total Expectation:
\begin{equation}
    \mathbb{E}[Z_f] = \mathbb{E}[Z_f \mid \mathcal{E}(X)]p_{\mathcal{E},f} + \mathbb{E}[Z_f \mid \mathcal{E}(X)^c](1 - p_{\mathcal{E},f}).
\end{equation}
We bound the complement term using the worst-case magnitude: $\mathbb{E}[Z_f \mid \mathcal{E}(X)^c] \le B_f$. Substituting this into the total expectation equation:
\begin{equation}
    \mathbb{E}[Z_f] \le \mathbb{E}[Z_f \mid \mathcal{E}(X)]p_{\mathcal{E},f} + B_f(1 - p_{\mathcal{E},f}).
\end{equation}
Rearranging to solve for the conditional expectation and substituting the Hoeffding lower bound for $\mathbb{E}[Z_f]$:
\begin{equation}
    \mathbb{E}[Z_f \mid \mathcal{E}(X)] \ge \frac{\left( \hat{\alpha}_{f,n} - \varepsilon_f(n) \right) - B_f(1 - p_{\mathcal{E},f})}{p_{\mathcal{E},f}}.
\end{equation}
The RHS is exactly the definition of $\alpha_f$. By the condition $\hat{\alpha}_{f,n} > B_f (1 - p_{\mathcal{E},f}) + \varepsilon_f(n)$, the numerator is strictly positive, ensuring $\alpha_f > 0$. Thus, the event $\mathcal{G}_f := \{ \mathbb{E}[Z_f \mid \mathcal{E}(X)] \ge \alpha_f \}$ holds with probability at least $1-\delta_f$. An identical derivation applies to the retain set, yielding $\alpha_r > 0$ with probability $1-\delta_r$.

Since the Phase-1 datasets $\mathcal{D}_{C_f}$ and $\mathcal{D}_{R_r}$ are drawn independently, we apply a union bound to ensure both bounds hold simultaneously:
\begin{align*}
    \Pr(\mathcal{G}_f \cap \mathcal{G}_r) = 1 - \Pr(\mathcal{G}_f^c \cup \mathcal{G}_r^c) & \ge 1 - (\Pr(\mathcal{G}_f^c) + \Pr(\mathcal{G}_r^c)) \ge 1 - (\delta_f + \delta_r) = 1 - \delta.
\end{align*}

\paragraph{Step 2:}
Recall the perturbation $\Delta = \|X\|_F (\epsilon_f \boldsymbol{\delta}_{\mathrm{forget}} - \epsilon_r \boldsymbol{\delta}_{\mathrm{retain}})$, and let $\boldsymbol{p}_{\mathrm{final}}=\epsilon_f\boldsymbol{\delta}_{\mathrm{forget}}-\epsilon_r\boldsymbol{\delta}_{\mathrm{retain}}$.

We first present a simple cross-entropy-to-argmax argument. For logits $z=f_\theta(x)$ and target superclass $S_f$, the softmax probability of $S_f$ is
\begin{align*}
    p_{S_f}
    =
    \frac{\exp(z_{S_f})}{\sum_{S\in\mathcal{S}}\exp(z_S)}.
\end{align*}
Therefore, the cross-entropy loss with target $S_f$ is
\begin{align}
    \ell(z,S_f)
    &=
    \log\left[
    \sum_{S\in\mathcal{S}}\exp(z_S)
    \right]
    -z_{S_f}.
\end{align}
Factoring out $\exp(z_{S_f})$ from the log-sum-exp term gives
\[
    \sum_{S\in\mathcal{S}}\exp(z_S)
    =
    \exp(z_{S_f})
    \left(
    1+\sum_{S\neq S_f}\exp(z_S-z_{S_f})
    \right).
\]
Substituting this identity into the previous display yields
\begin{align}
    {\ell(z,S_f)}
    &=
    {
    \log\left[
    \exp(z_{S_f})
    \left(
    1+\sum_{S\neq S_f}\exp(z_S-z_{S_f})
    \right)
    \right]
    -z_{S_f}} 
    =
    {
    \log\left(
    1+\sum_{S\neq S_f}\exp(z_S-z_{S_f})
    \right).}
\end{align}
If $S_f$ remains the argmax among $K=|\mathcal{S}|$ superclasses, then its softmax probability is at least $1/K$, and hence $\ell(z,S_f)\le \log K$. Therefore, $\ell(z,S_f)>\log K$ is a sufficient condition for $S_f$ not to remain the argmax. Similarly, if $\ell(z,S_f)<\log 2$, then the softmax probability of $S_f$ is larger than $1/2$, which implies that $S_f$ is the unique argmax.

\textbf{(i) Forgetting Condition:} Consider $X \sim P_{C_f}$ satisfying $\mathcal{E}(X)$. By Assumption \ref{ass:smoothness_app} (Directional Smoothness):
\begin{equation}
    \mathcal{L}(M_\theta(X+\Delta), S_f) - \mathcal{L}(M_\theta(X), S_f) \ge \langle g(X), \Delta \rangle - \frac{L}{2}\|\Delta\|_2^2.
\end{equation}
Conditioned on the alignment $\langle g(X), \boldsymbol{\delta}_{\mathrm{forget}} \rangle \ge \tau_f$ and the bounded projection $|\langle g(X), \boldsymbol{\delta}_{\mathrm{retain}} \rangle| \le B_r$:
\begin{equation}
    \langle g(X), \Delta \rangle = \|X\|_F (\epsilon_f \langle g(X), \boldsymbol{\delta}_{\mathrm{forget}} \rangle - \epsilon_r \langle g(X), \boldsymbol{\delta}_{\mathrm{retain}} \rangle) \ge \|X\|_F (\epsilon_f \tau_f - \epsilon_r B_r).
\end{equation}
Substituting into the smoothness bound:
\begin{equation}
    \mathcal{L}(M_\theta(X+\Delta), S_f) - \mathcal{L}(M_\theta(X), S_f) \ge \|X\|_F (\epsilon_f \tau_f - \epsilon_r B_r) - \frac{L}{2}\|X\|_F^2 \|\boldsymbol{p}_{\mathrm{final}}\|_2^2.
\end{equation}
Thus,
\begin{equation}
    \mathcal{L}(M_\theta(X+\Delta), S_f)
    \ge
    \mathcal{L}(M_\theta(X), S_f)
    +
    \|X\|_F (\epsilon_f \tau_f - \epsilon_r B_r)
    -
    \frac{L}{2}\|X\|_F^2 \|\boldsymbol{p}_{\mathrm{final}}\|_2^2.
\end{equation}
If the right-hand side exceeds $\log K$, then $\mathcal{L}(M_\theta(X+\Delta), S_f)>\log K$. By the argument earlier, $S_f$ cannot remain the argmax, ensuring
\[
\arg\max M_\theta(\text{Perturb}(X)) \neq S_f.
\]

\textbf{(ii) Retention Condition:} Consider $X \sim P_{R_r}$ satisfying $\mathcal{E}(X)$. Using the smoothness upper bound:
\begin{equation}
    \mathcal{L}(M_\theta(X+\Delta), S_f) - \mathcal{L}(M_\theta(X), S_f) \le \langle g(X), \Delta \rangle + \frac{L}{2}\|\Delta\|_2^2.
\end{equation}
Conditioned on $\langle g(X), \boldsymbol{\delta}_{\mathrm{retain}} \rangle \ge \tau_r$ and bounding $|\langle g(X), \boldsymbol{\delta}_{\mathrm{forget}} \rangle| \le B_f$:
\begin{equation}
    \langle g(X), \Delta \rangle \le \|X\|_F (\epsilon_f B_f - \epsilon_r \tau_r).
\end{equation}
The perturbed loss is therefore upper bounded by:
\begin{equation}
    \mathcal{L}(M_\theta(X+\Delta), S_f)
    \le
    \mathcal{L}(M_\theta(X), S_f)
    -
    \|X\|_F (\epsilon_r \tau_r - \epsilon_f B_f)
    +
    \frac{L}{2}\|X\|_F^2 \|\boldsymbol{p}_{\mathrm{final}}\|_2^2.
\end{equation}
If this upper bound is strictly smaller than $\log 2$, then $\mathcal{L}(M_\theta(X+\Delta), S_f)<\log 2$. By the argument earlier, the softmax probability of $S_f$ is larger than $1/2$, so $S_f$ is the unique argmax, ensuring
\[
\arg\max M_\theta(\text{Perturb}(X)) = S_f.
\]

Note: We use the thresholds $\log K$ and $\log 2$ because they give simple, distribution-free sufficient conditions. $\ell(z,S_f)>\log K$ guarantees that $S_f$ cannot remain the argmax, while $\ell(z,S_f)<\log 2$ guarantees that $S_f$ is the unique argmax. These thresholds are conservative but easier to verify than tighter margin-dependent conditions.

\paragraph{Step 3:}
Let $Y$ be the random variable $Z_f \mid \mathcal{E}(X)$. From Step 1, $\mathbb{E}[Y] \ge \alpha_f$. By Assumption \ref{ass:bounded_app}, $Y \le B_f$.
Define $W = B_f - Y \ge 0$. Then $\mathbb{E}[W] \le B_f - \alpha_f$.
By Markov's inequality:
\begin{equation}
    \Pr(Y \ge \tau_f) = \Pr(B_f - W \ge \tau_f) = 1 - \Pr(W > B_f - \tau_f) \ge 1 - \frac{B_f - \alpha_f}{B_f - \tau_f} = \frac{\alpha_f - \tau_f}{B_f - \tau_f}.
\end{equation}
This bound is valid and non-vacuous since $\alpha_f > 0$ (from Step 1) and we select $\tau_f < \alpha_f$. An identical bound applies to the retain set.
\end{proof}


\subsection{Feasibility region of $(\epsilon_f, \epsilon_r)$}

To explicitly characterize the range of $(\epsilon_f, \epsilon_r)$ under which both forgetting and retention are guaranteed, we analyze the geometry of the perturbation vector. This analysis formalizes the implicit trade-off between first-order directional alignment and second-order curvature penalties.

Starting from the two conditions~\ref{eq:eps_f_cond}-\ref{eq:eps_r_cond} in Theorem~\ref{thm:main_restate}, forgetting requires
\[
\mathcal{L}(M_\theta(X), S_f)
+
\|X\|_F(\epsilon_f\tau_f-\epsilon_r B_r)
-
\frac{L}{2}\|X\|_F^2\|\boldsymbol{p}_{\mathrm{final}}\|_2^2
>
\log K,
\]
while retention requires
\[
\mathcal{L}(M_\theta(X), S_f)
-
\|X\|_F(\epsilon_r\tau_r-\epsilon_f B_f)
+
\frac{L}{2}\|X\|_F^2\|\boldsymbol{p}_{\mathrm{final}}\|_2^2
<
\log 2,
\]
where
$
\boldsymbol{p}_{\mathrm{final}}=\epsilon_f\boldsymbol{\delta}_{\mathrm{forget}}-\epsilon_r\boldsymbol{\delta}_{\mathrm{retain}}
$
and
$
\rho:=\langle\boldsymbol{\delta}_{\mathrm{forget}},\boldsymbol{\delta}_{\mathrm{retain}}\rangle
$.
Since $\boldsymbol{\delta}_{\mathrm{forget}},\boldsymbol{\delta}_{\mathrm{retain}}$ are unit vectors, expanding the squared Euclidean norm yields
\[
\|\boldsymbol{p}_{\mathrm{final}}\|_2^2
=
\epsilon_f^2+\epsilon_r^2-2\rho\,\epsilon_f\epsilon_r.
\]

Dividing both inequalities by $\|X\|_F>0$ and rearranging yields
\[
\epsilon_f\tau_f-\epsilon_r B_r
-
\frac{\log K-\mathcal{L}(M_\theta(X), S_f)}{\|X\|_F}
>
\frac{L}{2}\|X\|_F
\big(\epsilon_f^2+\epsilon_r^2-2\rho\,\epsilon_f\epsilon_r\big),
\]
\[
\epsilon_r\tau_r-\epsilon_f B_f
-
\frac{\mathcal{L}(M_\theta(X), S_f)-\log 2}{\|X\|_F}
>
\frac{L}{2}\|X\|_F
\big(\epsilon_f^2+\epsilon_r^2-2\rho\,\epsilon_f\epsilon_r\big).
\]

We formally define the feasible sets for each objective as:
\[
F_f :=
\Big\{
(\epsilon_f,\epsilon_r):
\epsilon_f\tau_f-\epsilon_r B_r
-
\tfrac{\log K-\mathcal{L}(M_\theta(X), S_f)}{\|X\|_F}
>
\tfrac{L}{2}\|X\|_F(\epsilon_f^2+\epsilon_r^2-2\rho\epsilon_f\epsilon_r)
\Big\},
\]
\[
F_r :=
\Big\{
(\epsilon_f,\epsilon_r):
\epsilon_r\tau_r-\epsilon_f B_f
-
\tfrac{\mathcal{L}(M_\theta(X), S_f)-\log 2}{\|X\|_F}
>
\tfrac{L}{2}\|X\|_F(\epsilon_f^2+\epsilon_r^2-2\rho\epsilon_f\epsilon_r)
\Big\}.
\]
The feasible region for simultaneously achieving forgetting and retention is the intersection
\[
F = F_f \cap F_r.
\]
Each set corresponds to a quadratic inequality in $(\epsilon_f,\epsilon_r)$. The left-hand side of each inequality is affine and represents the desired first-order effect of the perturbation, while the right-hand side is a quadratic curvature penalty that grows with the perturbation magnitude.
This structure creates a bounded feasible region: the intersection $F$ contains coefficient pairs large enough to drive the intended behavior,i.e., pushing the post-perturbation forget loss above $\log K$ while keeping the post-perturbation retain loss below $\log 2$, yet small enough to avoid excessive second-order distortion.

The correlation term $\rho$ couples the two directions: when $\rho>0$ (aligned directions), the quadratic penalty increases, shrinking $F$ and making simultaneous satisfaction harder. When $\rho<0$ (opposing directions), the penalty weakens, enlarging $F$.
In the special case $\rho=0$ (orthogonal directions), the norm simplifies to $\|\boldsymbol{p}_{\mathrm{final}}\|_2^2=\epsilon_f^2+\epsilon_r^2$. The region simplifies to:
\[
F_f:\;
\epsilon_f\tau_f-\epsilon_r B_r
-
\tfrac{\log K-\mathcal{L}(M_\theta(X), S_f)}{\|X\|_F}
>
\tfrac{L}{2}\|X\|_F(\epsilon_f^2+\epsilon_r^2),
\]
\[
F_r:\;
\epsilon_r\tau_r-\epsilon_f B_f
-
\tfrac{\mathcal{L}(M_\theta(X), S_f)-\log 2}{\|X\|_F}
>
\tfrac{L}{2}\|X\|_F(\epsilon_f^2+\epsilon_r^2).
\]
Here the geometric coupling disappears, and feasibility depends only on the total perturbation energy $\epsilon_f^2+\epsilon_r^2$. This makes it easier to satisfy both guarantees when the forget and retain directions are nearly orthogonal, as optimizing one coefficient does not geometrically inflate the curvature cost of the other through a cross-term.

\section{Privacy and deployment risks}
\label{sec:privacy_risks}

ERASE provides functional privacy in the deployment setting where the attacker observes the released inference behavior, e.g., the output of the prediction API after conditional perturbation. It should not be interpreted as structural removal of information from the model parameters. If an attacker obtains white-box access to the original model weights, then functional forgetting alone does not remove the information stored in those weights and does not preserve privacy against such an attacker. Similarly, the stored subclass directions and optimized perturbation coefficients are deployment artifacts that should be treated as sensitive and protected by access control.

Our MIA evaluation therefore measures black-box output-level membership leakage through softmax outputs after conditional perturbation. It does not prove complete removal of training influence. Stronger white-box, adaptive, influence-based, or reconstruction-style privacy audits test stronger threat models and remain important future work.

ERASE may also introduce a timing side channel. Since the conditional version may require a second forward pass when the gate predicts the target superclass $S_f$, an attacker observing latency could infer whether the gate was triggered. A simple mitigation is constant-time inference, e.g., always performing two forward passes or adding latency padding, so that gated and non-gated inputs are indistinguishable from timing.

\end{document}